\documentclass[sigconf]{acmart}
\copyrightyear{2026}
\acmYear{2026}
\setcopyright{cc}
\setcctype{by}
\acmConference[KDD '26]{Proceedings of the 32nd ACM SIGKDD Conference on Knowledge Discovery and Data Mining V.2}{August 09--13, 2026}{Jeju Island, Republic of Korea}
\acmBooktitle{Proceedings of the 32nd ACM SIGKDD Conference on Knowledge Discovery and Data Mining V.2 (KDD '26), August 09--13, 2026, Jeju Island, Republic of Korea}
\acmDOI{10.1145/3770855.3818470}
\acmISBN{979-8-4007-2259-2/2026/08}

\AtBeginDocument{%
  }

\newcommand{\merit}{\mathrm{cont}}
\newcommand{\belief}{\mathrm{hist}}
\newcommand{\rc}{\mathrm{RC}}
\newcommand{\anc}{\mathrm{anc}}

\newtheorem{theorem}{Theorem}
\newtheorem{lemma}{Lemma}

\newtheorem{proposition}{Proposition}
\newtheorem{corollary}{Corollary}

\usepackage{algorithm}
\usepackage{algorithmic}

\begin{document}
\setlength{\abovedisplayskip}{3pt}
\setlength{\belowdisplayskip}{3pt}
\setlength{\abovedisplayshortskip}{2pt}
\setlength{\belowdisplayshortskip}{2pt}
\title{Representation Curriculum: Stagewise Training for Robust Ranking and Allocation}

\author{Ehsan Ebrahimzadeh}
\author{Sina Baharlouei}
\author{Abraham Bagherjeiran}
\affiliation{
  \institution{eBay Search Ranking and Monetization}
  \city{San Jose}
  \state{California}
  \country{USA}
}

\renewcommand{\shortauthors}{Ebrahimzadeh, Baharlouei, Bagherjeiran} 

\begin{abstract}
Ranking in digital marketplaces is a dynamic exposure-allocation mechanism: displayed items shape discovery trajectories and success events, which are logged by the platform to update future allocation policies.
Modern ranking systems therefore rely heavily on endogenous, exposure-confounded signals (e.g. popularity estimates, CTR/CVR aggregates, and ID-based representation), because they are highly predictive under approximately stationary demand and explain substantial variance in logged outcomes. 
Yet this predictive power can become a learning shortcut: early access to exposure-dependent \emph{belief} signals in training steers optimization dynamics toward over-reliance on them and away from exposure-independent \emph{merit} signals (e.g. content-based competitiveness and semantic intent affinity estimates). Consequently, the learned policy tends to entrench incumbents and degrade cold-start generalization and robustness under distribution shift.

We propose \textit{Representation Curriculum} (\textbf{RC}), a semantics-aware training-time optimization-trajectory intervention that temporally stages feature utilization.
RC foregrounds content-based merit signals in the initial stage of training, then introduces exposure-dependent historic belief signals while anchoring the content pathway to remain close to the learned merit representation, curbing shortcut reliance on historical signals and mitigating gradient starvation on content signals. 
We formalize RC independently of task and hypothesis class and provide ranking-specific instantiations. In a Gaussian linear ridge setting, we derive closed-form solutions and verifiable sufficient conditions under which RC strictly reduces population risk on a welfare-aligned cold-start target distribution, with a quantified Pareto tradeoff against source performance.
Experiments on public learning-to-rank and recommendation benchmarks, together with randomized online experiments in a large-scale e-commerce product search system, show that RC measurably shifts reliance from historical belief signals toward content-based merit signals and yields consistent gains on cold populations with a controlled trade-off in head performance.
\end{abstract}

\begin{CCSXML}
<ccs2012>
<concept>
<concept_id>10002951.10003317.10003338.10003343</concept_id>
<concept_desc>Information systems~Learning to rank</concept_desc>
<concept_significance>500</concept_significance>
</concept>
<concept>
<concept_id>10010405.10003550.10003555</concept_id>
<concept_desc>Applied computing~Online shopping</concept_desc>
<concept_significance>500</concept_significance>
</concept>
</ccs2012>
\end{CCSXML}
\ccsdesc[500]{Information systems~Learning to rank}
\ccsdesc[500]{Applied computing~Online shopping}
\keywords{Learning to Rank; Curriculum Learning; Mechanism Design; Exploitation Bias; Cold Start}
\received[accepted]{1 June 2026}

\maketitle
\section{Introduction}
\label{sec:intro}
Modern ranking systems in search and recommendation make allocation decisions that shape how consumers discover inventory, how suppliers receive exposure, and how the marketplace evolves over time.
Because these policies mediate outcomes for multiple stakeholders, they are expected to account for the incentives of suppliers, consumers, and the platform and generalize reliably across heterogeneous traffic segments.
This is especially challenging in dynamic marketplaces, with high novelty, heterogeneity, and churn on the supply side, and non-stationary demand and heterogeneous intent on the consumer side.
It is therefore crucial to maintain tighter control on \emph{how} \emph{decision-making primitives} are used to make welfare aligned allocations.
When rankers rely primarily on exposure-shaped historical signals that are endogenous to the platform’s past allocations, they can exhibit self-reinforcing generalization patterns that disproportionately favor incumbents and weaken discovery, fairness, and long-run platform health.
Popularity-driven generalization can lead to: (i) cold-start fragility that harms discovery and new supplier on-boarding, (ii) shifting the burden of discovery to users rather than relying on exposure-independent content-based merit signals, (iii) contextual intent misalignment when feedback is pooled across contexts to obtain statistically stable estimates, and (iv) self-reinforcing exposure loops that entrench incumbents\cite{chaney2018algorithmic}.

Existing approaches mitigate this \emph{popularity bias}\cite{chen2023biasdebias,abdollahpouri2019managingpopularity} through: (i) correcting training data selection effects counterfactual learning techniques like propensity weighting; (ii)  shaping supervision through distillation or reward modeling to encourage desired allocation behavior; (iii) shrinkage and uncertainty-aware belief estimation (e.g., empirical Bayes); (iv) architectural constraints that limit history–content fusion with content pathways; and (v) stochastic masking/dropout as regularization techniques in training.
These methods are broadly complementary, but they are typically oblivious to data semantics and \emph{how} learning allocates capacity across feature groups over time.
We argue, however, that early access to belief signals can dominate gradients and induce a stable shortcut solution that under-learns content-based merit and propose a semantics-aware training trajectory intervention: delay access to belief signals and protect the content pathway, enabling explicit tradeoffs between head performance on logged data and robustness on welfare-aligned target segments.
\vspace{-2mm}
\subsection{Prior art}
\label{sec:intro-prior}


\emph{(1) Counterfactual and causal learning from endogenous logs.}
Counterfactual learning to rank(LTR) approaches account for biased feedback (e.g., position bias) in empirical risk minimization via propensity weighting \cite{swaminathan2015counterfactual,joachims2017unbiasedltr,schnabel2016recommendationsastreatments}.
Structural causal models treat popularity as a confounder and intervene explicitly \cite{zhang2021causal,wei2021model}.
The broader goal of robustness to distribution shift has been studied under invariant prediction\cite{peters2016invariant}, invariant risk minimization\cite{arjovsky2019irm}, and anchor regression \cite{rothenhausler2021anchor}.
These works target identification and evaluation, but typically treat feature representations as given and do not directly control how the learner utilizes different feature groups.

\emph{(2) Supervision-based behavior shaping.}
Platforms can encode preferences via distillation from a merit-centric teacher \cite{ebrahimzadeh2024kd,hinton2015distilling}, constrained optimization for exposure fairness \cite{singh2018exposure,singh2019policyfair,singh2021uncertaintyfair,tang2024multi}, or reward shaping and policy imitation\cite{chen2022off}.
These methods shape behavior through objectives, but do not necessarily prevent shortcut learning when belief signals dominate early optimization.

\emph{(3) History stabilization and shrinkage.}
Industrial systems use uncertainty-aware belief estimates (e.g., Empirical Bayes smoothing with global or content-based priors) to aggregate platform belief for LTR and product search \cite{yang2022clickslabels,yang2024ebrank,han2022ebayesproductsearch, ardywibowo2025bayescns}.
EB improves the quality of exposure-dependent signals via reduces variance and calibrated uncertainty,  but it does not directly address the learning dynamics problem and belief signals can still dominate learning.

\emph{(4) Architectural constraints and explicit decompositions.}
Multi-tower architectures, gating, and limited fusion constrain how historic belief and content-based merit representations combine\cite{volkovs2017dropoutnet,wang2018lrmm}.
They can enforce interpretable policy structure but may under-perform if explicit decomposition is too strict.

\emph{(5) Optimization-trajectory interventions.}
Training dynamics influence the learned representations and strong predictors can suppress gradient signal for weaker cues in neural networks \cite{pezeshki2021gradientstarvation}.
Generic regularizers (dropout) and stochastic masking reduce reliance on any single feature group \cite{srivastava2014dropout,volkovs2017dropoutnet,wang2018lrmm}, but are usually semantics-agnostic and not welfare aligned. Our approach differs from standard curriculum learning\cite{ferro2018continuation, zeng2022curriculum} in that we shape the properties of the model by explicit anchoring to the content pathway.
\vspace{-2mm}
\subsection{Contributions}
\label{sec:intro-contrib}

\paragraph{A semantics-aware representation curriculum for robust allocation:}
We propose \emph{Representation Curriculum} (RC), which partitions features into content-based merit signals and exposure-dependent historic belief signals and temporally stages the training.
Stage~1 learns a model trained only on content-based signals.
Stage~2 introduces historic signals but anchors the optimization so that content-based competence is preserved.
This feature-access curriculum complements the standard approaches to mitigate over-reliance on historic belief signals by controlling how the learner relies on them.

\paragraph{Theoretical Guarantees with quantified trade-offs and Verifiable Sufficient Conditions:}
We analyze RC in linear ridge regression, providing verifiable conditions under which RC reduces population risk on a target distribution that emphasizes cold/strategic segments, while remaining competitive on the logged data distribution.
We extend the lens to boosted trees via base-margin curricula and to neural ranking, where we connect pairwise/listwise saturation to feature starvation and provide practical diagnostics.

\paragraph{Experimental Evidence on Public Benchmarks and Deployed Systems:}
On MSLR-WEB, we identify highly predictive historic belief based features that encode behavioral evidence, and show how RC reduces feature importance on these historic belief signals and improves cold-item ranking quality with minimal impact on overall performance.
On MovieLens, RC shapes neural two-tower recommenders toward stronger content reliance and yields stronger frozen-start performance in the absence of historic signals.
Finally, an online A/B test in a major e-commerce sponsored search system establishes that a policy trained via RC increases exposure and sale velocity for new listings with neutral aggregate KPIs, situating RC as an effective behavior shaping technique in large scale systems.
\vspace{-1mm}
\section{Problem Setup: Robust Policy Updates under Endogenous History Signals}
\label{sec:setup}
We formalize ranking as an allocation mechanism with two semantically distinct classes of features:
\emph{exposure-independent, content-based merit signals} and \emph{exposure-dependent historical belief signals}.
At iteration $t$, a deployed policy $\pi_t$ maps a context $c$ (query, session, user intent state) and a candidate set to a ranked slate(or a distribution over rankings).
Exposure induces interactions, and a data sampling policy produces a logged dataset $\mathcal{D}_t$, which is, in turn, used to train a new policy $\pi_{t+1}$; that is
\begin{equation}
\pi_t \ \longrightarrow\  \mathcal{D}_t \sim \mathcal{P}(\pi_t)\ \longrightarrow\  \pi_{t+1},
\label{eq:setup-loop}
\end{equation}
where $\mathcal{P}(\pi_t)$ denotes the data distribution induced by deployment (including selection effects and feedback).
A key challenge is that $\mathcal P(\pi_t)$ is \emph{endogenous}: which contexts and items appear, and which outcomes are observed, are shaped by the deployed policy.
Platforms rarely optimize solely for the population represented in logged data. Instead, they choose a \emph{target distribution} $\mathcal Q$ that reflects strategic or welfare-aligned goals (e.g., improving long-tail coverage, supporting new supply, prioritizing high-value intents, or meeting fairness constraints).
Our aim is to train the next policy so that it generalizes to a platform-chosen target population and does not over-rely on endogenous belief features. If each update is robust to the $\mathcal{P}(\pi_t)$--$\mathcal{Q}$ mismatch, then repeated deployment is less likely to amplify feedback loops. Over time, this helps maintain alignment with long-run platform objectives.

Each training example is a tuple $\mathbf{z}=(c,i,y,\mathbf{x})$, where $c$ is a context, $i$ is an item, $y$ is a label (e.g., relevance, click, conversion), and $\mathbf{x}$ is a feature vector.
We assume the features decompose into two semantic groups, $\mathbf{x} \equiv (\mathbf{x}_C, \mathbf{x}_H)$, where
\begin{itemize}
  \item $\mathbf{x}_C$ are \textbf{content-based signals}, meaning features available regardless of prior exposure (e.g., query--item affinity, item attributes, price and logistics signals, content embeddings).
  \item $\mathbf{x}_H$ are \textbf{exposure-dependent historical belief signals}, meaning features derived from past exposure and interactions (e.g., stabilized CTR/CVR estimates, historical transactions, item IDs, or history-derived embeddings).
\end{itemize}

Historical belief signals $\mathbf{x}_H$ are powerful because they compactly summarize accumulated interaction evidence under past allocations.
However, they are also \emph{policy-mediated}: their distribution depends on the platform's prior exposure decisions.
As a result, for strategic segments such as cold-start or under-exposed inventory, $\mathbf{x}_H$ can differ substantially under the target population $\mathcal{Q}$, which upweights such segments, relative to the logging distribution $\mathcal{P}(\pi_t)$.
\vspace{-1mm}
\subsection{Over-Reliance on Endogenous Signals}
\label{sec:problem-twofeature}
To motivate semantics-aware control over learning dynamics, we first illustrate the pathology in a minimal setting.
Consider a linear predictor
$f_\mathbf{w}(\mathbf{x})=\mathbf{w}_C^\top \mathbf{x}_C + \mathbf{w}_H^\top \mathbf{x}_H$
trained to minimize expected loss under the logged distribution $\mathcal P$ (induced by deployment).
Suppose $\mathbf{x}_H$ is strongly predictive under $\mathcal P$ because it aggregates exposure-conditioned evidence and explains variance in selected logged contexts, while $\mathbf{x}_C$ is moderately predictive but broadly available.
If the learning algorithm observes $(\mathbf{x}_C,\mathbf{x}_H)$ jointly from the start, it will typically assign large weight to $\mathbf{x}_H$ because it yields the fastest reduction in empirical loss.
This can lead to \emph{over-reliance}:
\[
\|\mathbf{w}_H\| \text{ becomes large while } \|\mathbf{w}_C\| \text{ remains small.}
\]
Now consider a target population $\mathcal Q$ that deviates from $\mathcal P$, for example by upweighting the cold-start items.
In these segments, $\mathbf{x}_H$ is a weak predictor, so predictive performance depends disproportionately on the learned content pathway $\mathbf{w}_C$.
Thus, a model that is highly accurate under $\mathcal P$ can generalize poorly under $\mathcal Q$.

This effect is beyond merely `feature missingness'' in the target population and reflects a broader shift in the joint distribution $(c,i,y)$: the platform strategically cares about success events for items and contexts that are underrepresented under $\mathcal P(\pi_t)$.

The same phenomenon appears sharply in pairwise learning-to-rank.
Let $s_{\boldsymbol{\theta}}(c,i)$ be a scoring function and consider the standard pairwise logistic loss used in LambdaLoss-style training \cite{wang2018lambdaloss}:
\begin{equation}
\ell(\boldsymbol{\theta}; c,i,j)
\;=\;
\log\bigg(1+\exp\Big(-\big(s_{\boldsymbol{\theta}}(c,i)-s_{\boldsymbol{\theta}}(c,j)\big)\Big)\bigg).
\label{eq:pairwise-loss}
\end{equation}
For brevity, let $s_i := s_{\boldsymbol{\theta}}(c,i)$ and $s_j := s_{\boldsymbol{\theta}}(c,j)$.
The gradient magnitude with respect to the score difference is proportional to $\sigma\big(-(s_i-s_j)\big)$, where $\sigma$ is the sigmoid.

If exposure-dependent belief signals $\mathbf{x}_H$ provide a highly predictive shortcut, early training can quickly create large score gaps $|s_i-s_j|$ by exploiting $\mathbf{x}_H$.
Then $\sigma\big(-(s_i-s_j)\big)$ saturates toward $0$, and gradients for other parameters are correspondingly downweighted, including those that control content representations.
As a result, content signals $\mathbf{x}_C$ may be \emph{starved of learning signal} even if they are predictive and essential for strategic generalization.

This starvation mechanism is not tied to a particular hypothesis class.
It follows from loss geometry (saturation) and path dependence: once one feature group achieves separation early, the optimizer has little incentive to develop alternative separating representations.
While \emph{gradient starvation} is discussed in the context of over-parameterized neural networks under gradient descent updates \cite{pezeshki2021gradientstarvation}, we use the term more broadly here, with a particular focus on the semantics-driven case of historical belief versus content-based merit signals in ranking and prediction.
\vspace{-2mm}
\subsection{Formal objective and problem statement}
\label{sec:problem-formal}
Let $\mathcal F$ be a hypothesis class and let $f\in\mathcal F$ be a predictor, which induces a score-based allocation policy $\pi_f=\mathrm{argsort}(f)$.
Let $\mathcal L(f;\mathbf{z})$ be the task loss (e.g., MSE, cross-entropy, pairwise or listwise ranking loss).
For any distribution $\mathcal D$ over examples $\mathbf{z}$, define the population risk
\vspace{-2mm}
\[
\mathcal R_{\mathcal D}(f)\;\equiv\;\mathbb E_{\mathbf{z}\sim \mathcal D}\!\left[\mathcal L(f;\mathbf{z})\right].
\]
We observe data $\mathcal D_t\sim \mathcal P(\pi_t)$ but aim to minimize risk under a platform-chosen target distribution $\mathcal Q$.
Let $f^{\textsc{Full}}$ denote the model trained by standard joint empirical risk minimization (ERM) on all features $(\mathbf{x}_C,\mathbf{x}_H)$, and let $f^{\textsc{Content}}$ denote the content-only model trained with $\mathbf{x}_H$ masked.
Our goal is to design a training procedure that produces a predictor $f$ such that:
\begin{enumerate}
  \item \textbf{Target robustness:} $\mathcal R_{\mathcal Q}(f)$ improves relative to $f^{\textsc{Full}}$, especially on strategic subpopulations (e.g., cold-start or under-exposed inventory).
  \item \textbf{Source competitiveness:} $\mathcal R_{\mathcal P(\pi_t)}(f)$ remains competitive with $f^{\textsc{Full}}$ and does not collapse to $f^{\textsc{Content}}$.
  \item \textbf{Behavioral shaping / fairness:} $f$ reduces excessive dependence on $\mathbf{x}_H$, mitigating self-reinforcing exposure disparities in the induced allocation policy, while preserving overall marketplace KPIs.
\end{enumerate}

A clean and widely applicable instantiation is the \emph{frozen-start} target, where belief signals are unavailable at inference time:
\begin{equation}
\mathbf{x}_H \equiv 0 \quad \text{(or set to a default/prior) under } \mathcal Q.
\label{eq:frozen-start}
\end{equation}
This yields a controlled notion of distribution shift that isolates the central question: whether the learner has developed a predictive content pathway.
Although our motivating setting is ranking in dynamic marketplaces, the same problem arises in prediction tasks whenever a feature group is (i) highly predictive under the training distribution and (ii) unstable, missing, or strategically de-emphasized under the target distribution.
Our method applies to both ranking and pointwise prediction objectives.
\vspace{-1mm}
\section{Methodology: Representation Curriculum}
\label{sec:method}
We propose \emph{representation curriculum} (RC), a semantics-aware training time intervention that controls reliance on exposure-dependent historical belief signals via \emph{temporal staging}.
RC is motivated by the observation that when $\mathbf{x}_H$ is highly predictive under the logged distribution, standard training can over-invest in $\mathbf{x}_H$ early in optimization.
This yields a weak content pathway and poor robustness under distribution shift in the target population.

\paragraph{Feature staging via a gating schedule.}
RC introduces a \emph{feature-gating schedule} $m(t)\in[0,1]$ over training iterations $t$, defining the gated input
\begin{equation}
\tilde{\mathbf{x}}_t \;\equiv\; (\mathbf{x}_C,\; m(t)\,\mathbf{x}_H).
\label{eq:gating}
\end{equation}
While the schedule can be gradual, we focus on a practical two-stage curriculum:
\[
m(t)=0\;\;\text{for}\;\;t\le T_1
\qquad\text{and}\qquad
m(t)=1\;\;\text{for}\;\;t>T_1,
\]
that is, Stage~1 trains on content signals only, then Stage~2 trains on all signals.
Stage~1 produces a \emph{content anchor model} that must explain label variation using $\mathbf{x}_C$, and therefore learns content representations that are more robust to distribution shift. 

\paragraph{Anchored Stage~2 for a robust content pathway}
Let $f_{\boldsymbol{\theta}}$ denote the model class and let $\boldsymbol{\theta}_C$ denote the subset of parameters implementing the \emph{content pathway}
(e.g., a content tower in a two-tower neural model, or the Stage~1 component of an additive model).
Let $\boldsymbol{\theta}^{(1)}$ be the parameters obtained after Stage~1.
Stage~2 optimizes the task loss on full features while anchoring to the Stage~1 content pathway in one (or both) of the following ways.

\smallskip
\noindent\textbf{(i) Prediction anchoring (masked consistency).}
We enforce that the Stage~2 model remains consistent with the Stage~1 model on \emph{masked inputs}:
\begin{equation}
\mathcal L_{\text{pred}}(\boldsymbol{\theta})
\;=\;
\mathbb E_{(\mathbf{x},y)\sim\mathcal P(\pi_t)}
\Big[ d\Big(f_{\boldsymbol{\theta}}(\mathbf{x}_C,0),\, f_{\boldsymbol{\theta}^{(1)}}(\mathbf{x}_C,0)\Big)\Big],
\label{eq:pred-anchor}
\end{equation}
where $d(\cdot,\cdot)$ is a discrepancy in prediction space.
For regression we use squared error; for binary classification we use KL or squared logit differences; for ranking we use squared score
differences on query--item pairs (or KL on pairwise probabilities), consistent with the downstream surrogate loss.

\smallskip
\noindent\textbf{(ii) Parameter anchoring (content-path protection).}
We explicitly protect the content pathway parameters:
\begin{equation}
\mathcal L_{\text{par}}(\boldsymbol{\theta})
\;=\;
\|\boldsymbol{\theta}_C-\boldsymbol{\theta}_{C}^{(1)}\|_2^2.
\label{eq:param-anchor}
\end{equation}
This is useful when function-space consistency alone is insufficient (e.g., when $\mathbf{x}_H$ induces strong gradients that substantially
alter internal representations while keeping masked outputs approximately unchanged).

Prediction anchoring preserves function behavior under masked belief inputs, while parameter anchoring preserves internal content representations.
\smallskip
\noindent\textbf{Stage~2 objective.}
Let $\mathcal L(\cdot\,;\,y)$ be the primary supervised loss (MSE, log-loss, pairwise/listwise ranking surrogate).
RC trains Stage~2 with
\begin{equation}
\mathbb E_{(\mathbf{x},y)\sim\mathcal P_2}\!\left[
\mathcal L\!\left(f_{\boldsymbol{\theta}}(\mathbf{x}_C,\mathbf{x}_H);y\right)\right]
+\lambda\,\Omega(\boldsymbol{\theta})
+\mu_1\,\mathcal L_{\text{pred}}(\boldsymbol{\theta})
+\mu_2\,\mathcal L_{\text{par}}(\boldsymbol{\theta}),
\label{eq:stage2-obj}
\end{equation}
where $\Omega$ is standard regularization (weight decay, tree constraints, early stopping), and $(\mu_1,\mu_2)$ control anchoring strength.
By default we take $\mathcal P_2=\mathcal P(\pi_t)$ (the logged distribution), but RC naturally supports stage-specific data choices:
Stage~1 can be trained on a broader proxy $\mathcal P_1$ better aligned with learning content affinity (e.g., broader traffic),
while Stage~2 can use richer contexts where belief signals are informative (e.g., conversion-heavy traffic).

\begin{algorithm}[t]
\caption{Representation Curriculum (RC)}
\label{alg:rc}
\begin{algorithmic}[1]
\REQUIRE Stage-1 data $\mathcal D_1\!\sim\!\mathcal P_1$, Stage-2 data $\mathcal D_2\!\sim\!\mathcal P_2$; features $\mathbf{x}=(\mathbf{x}_C,\mathbf{x}_H)$;
loss $\mathcal L$; regularizer $\Omega$; anchors $(\mu_1,\mu_2)$.
\STATE \textbf{Stage 1 (content-only):} train $f_{\boldsymbol{\theta}^{(1)}}$ on $\{(\mathbf{x}_C,0),y\}\subset \mathcal D_1$ by minimizing
$\;\mathbb E[\mathcal L(f_\theta(\mathbf{x}_C,0);y)] + \lambda\Omega(\boldsymbol{\theta})$.
\STATE Initialize Stage 2 at $\boldsymbol{\theta}\leftarrow \boldsymbol{\theta}^{(1)}$.
\STATE \textbf{Stage 2 (full + anchors):} train $f_{\boldsymbol{\theta}}$ on $\{(\mathbf{x}_C,\mathbf{x}_H),y\}\subset \mathcal D_2$ by minimizing (\ref{eq:stage2-obj})
\RETURN $f_{\boldsymbol{\theta}}$.
\end{algorithmic}
\end{algorithm}

\paragraph{Key instantiations.}
RC is model-agnostic; the main design choice is what constitutes the \emph{content pathway} $\boldsymbol{\theta}_C$ and how anchoring is implemented.

\smallskip
\noindent\textbf{Neural models (two-tower + MLP head).}
In our recommender experiments, $f_{\boldsymbol{\theta}}$ is a two-tower model:
a user/context tower produces $u_\phi$, an item-content tower produces $v_\psi(\mathbf{x}_C)$, and the final score is computed by an MLP head
on the concatenation (and optionally explicit interactions),
\[
s_{\boldsymbol{\theta}}(\mathbf{x})=g_\omega\!\left(\big[u_\phi,\; v_\psi(\mathbf{x}_C),\; u_\phi\odot v_\psi(\mathbf{x}_C),\; \varphi(\mathbf{x}_H)\big]\right),
\]
where $\varphi(\mathbf{x}_H)$ embeds/scales belief features.
We define $\boldsymbol{\theta}_C$ as the parameters of the content tower (and, optionally, the content-dependent blocks of the head),
so $\mathcal L_{\text{par}}$ directly protects content representations.
Prediction anchoring $\mathcal L_{\text{pred}}$ is computed under masked belief input $(\mathbf{x}_C,0)$.
Practically, stability in Stage~2 is improved by initializing from $\boldsymbol{\theta}^{(1)}$ and using conservative optimization (e.g., not increasing
learning rate, optionally lowering it for $\boldsymbol{\theta}_C$).

\smallskip
\noindent\textbf{Gradient Boosting}
For Gradient Boosted Decision Trees (GBDTs), RC uses an integer $M$ as the stage boundary.
Stage~1 fits $M$ trees using only content features, yielding
\[
f^{(1)}(\mathbf{x}_C)=f_0+\sum_{m=1}^{M}\eta\,h_m(\mathbf{x}_C).
\]
Stage~2 continues boosting from this \emph{fixed} content pathway by using $f^{(1)}$ as an initial score / base margin and fitting the remaining
trees using all features:
\[
f^{\textsc{RC}}(\mathbf{x}_C,\mathbf{x}_H)=f^{(1)}(\mathbf{x}_C)+\sum_{m=M+1}^{T}\eta\,h_m(\mathbf{x}_C,\mathbf{x}_H).
\]
This realizes an especially strong form of parameter anchoring: the Stage~1 trees are never modified, so the content pathway cannot be overwritten.
The effective deviation introduced by belief features is controlled by $T-M$ (remaining capacity), shrinkage $\eta$, and early stopping.
When supported by the training framework, an additional masked-consistency term analogous to \eqref{eq:pred-anchor} can be implemented by adding
masked replicas of training instances with a pseudo-target equal to $f^{(1)}(\mathbf{x}_C)$; in our main GBDT experiments, tuning $M$ together with
standard boosting regularization is sufficient.

\paragraph{Relation to stochastic feature masking.}
Stochastic feature masking / dropout \cite{srivastava2014dropout} randomly zeros subsets of inputs throughout training.
While it can reduce reliance on any single feature, it is \emph{semantics-oblivious} and does not ensure a robust content pathway is learned
\emph{before} exposure-dependent shortcuts are introduced.
In contrast, RC explicitly stages access to $\mathbf{x}_H$ and then anchors the model to preserve content behavior and/or content parameters.
This targeted control is central to improving robustness on strategic populations where historic belief signals are not reliable predictors.

\paragraph{Evaluation and reliance diagnostics.}
RC aims to reduce over-reliance on exposure-dependent belief signals while maintaining overall utility.
Accordingly, we report (i) standard in-distribution metrics on held-out splits from $\mathcal P(\pi_t)$,
(ii) target robustness metrics on $\mathcal Q$---notably \emph{frozen-start} evaluation where belief signals are set to defaults,
and (iii) reliance diagnostics such as feature importance in trees, sensitivity/attribution measures in neural models, and
popularity/coverage metrics in ranking (Section~\ref{sec:experiments}).

\vspace{-1mm}
\section{Theoretical Guarantees}
\label{sec:theory}
We formalize Representation Curriculum in a linear-Gaussian setting with ridge regularization.
This yields closed-form characterizations that make the ``shape the update'' perspective precise and
quantify the tradeoff between performance on the logged distribution $\mathcal P(\pi_t)$ and a strategic target $\mathcal Q$.
Let $\mathbf{x}=(\mathbf{x}_C,\mathbf{x}_H)\in\mathbb R^{d_C+d_H}$ with zero mean.
We consider the population model
\[
y \;=\; \mathbf{x}_C^\top \beta_C + \mathbf{x}_H^\top \beta_H + \varepsilon,\qquad
\mathbb E[\varepsilon\mid \mathbf{x}]=0,\ \ \mathbb E[\varepsilon^2]=\sigma^2.
\]
Let $\Sigma_{\mathcal D}=\mathbb E_{\mathcal D}[\mathbf{x}\mathbf{x}^\top]$ denote the covariance under distribution $\mathcal D$,
and similarly $\Sigma_{\mathcal D,CC}=\mathbb E_{\mathcal D}[\mathbf{x}_C \mathbf{x}_C^\top]$. We compare three estimators trained on the logged distribution $\mathcal P$:

\paragraph{(i) Full ridge (no curriculum).}
\begin{equation}
\mathbf{w}^{\textsc{Full}}
\;=\;
\arg\min_{\mathbf{w}_C,\mathbf{w}_H}\;
\mathbb E_{\mathcal P}\!\left[(y-\mathbf{x}_C^\top \mathbf{w}_C-\mathbf{x}_H^\top \mathbf{w}_H)^2\right]
+\lambda(\|\mathbf{w}_C\|_2^2+\|\mathbf{w}_H\|_2^2).
\label{eq:full-ridge}
\end{equation}

\paragraph{(ii) Content-only ridge (Stage 1).}
\begin{equation}
\begin{aligned}
\mathbf{w}_C^{(1)}
=\arg\min_{\mathbf{w}_C}\;&
\mathbb E_{\mathcal P}\!\left[\big(y-\mathbf{x}_C^\top \mathbf{w}_C\big)^2\right]
+\lambda\|\mathbf{w}_C\|_2^2, \\
&\mathbf{w}^{\textsc{Content}} \equiv (\mathbf{w}_C^{(1)},0).
\end{aligned}
\label{eq:content-ridge}
\end{equation}

\paragraph{(iii) Representation curriculum (anchored ridge, Stage 2).}
We model Stage~2 as ridge with a \emph{content-parameter anchor}:
\begin{equation}
\begin{aligned}
\mathbf{w}^{\textsc{RC}}(\mu)
=\arg\min_{\mathbf{w}_C,\mathbf{w}_H}\;&
\mathbb E_{\mathcal P}\!\left[
\big(y-\mathbf{x}_C^\top \mathbf{w}_C-\mathbf{x}_H^\top \mathbf{w}_H\big)^2
\right] \\
&+ \lambda\|\mathbf{w}_C\|_2^2 + \lambda\|\mathbf{w}_H\|_2^2
+ \mu\|\mathbf{w}_C-\mathbf{w}_C^{(1)}\|_2^2 .
\end{aligned}
\label{eq:rc-ridge}
\end{equation}
This is the population analogue of Stage~2 parameter anchoring  in (\ref{eq:stage2-obj}).
Prediction anchoring yields closely related bounds; we defer that variant to Appendix~\ref{app: main-theorem}.

\subsection{Closed-form characterization}
\label{sec:theory-closedform}

Write the ridge normal equations under $\mathcal P$ using block covariance matrices:
\[
A_{CC} \!=\! \Sigma_{\mathcal P,CC}+\lambda I,\quad
A_{HH} \!=\! \Sigma_{\mathcal P,HH}+\lambda I,\quad
A_{CH}\!=\!\Sigma_{\mathcal P,CH},\quad A_{HC}\!=\!A_{CH}^\top.
\]
Define the Schur complement
\begin{equation}
A_0 \;\equiv\; A_{CC} - A_{CH}A_{HH}^{-1}A_{HC}.
\label{eq:schur}
\end{equation}
We assume $A_0\succ 0$ with eigenvalues in $[a_{\min},a_{\max}]$.
This condition fails only when content features are perfectly redundant once history is present (e.g., perfect collinearity).

\begin{lemma}[Content-path interpolation]
\label{lem:interp}
Define $\mathbf{w}_\mu=\mu(A_0+\mu I)^{-1}$. let $\mathbf{w}_C^{\textsc{Full}}$ denote the content block of $\mathbf{w}^{\textsc{Full}}$ and $\mathbf{w}_C^{(1)}$ denote the Stage~1 solution.
Then the content block of the RC solution satisfies
\begin{equation}
\mathbf{w}_C^{\textsc{RC}}(\mu)
\;=\;
\mathbf{w}_C^{\textsc{Full}} \;+\; \mu(A_0+\mu I)^{-1}\,\big(\mathbf{w}_C^{(1)}-\mathbf{w}_C^{\textsc{Full}}\big),
\label{eq:interp}
\end{equation}
and the history block satisfies
\begin{equation}
\mathbf{w}_H^{\textsc{RC}}(\mu)
\;=\;
\mathbf{w}_H^{\textsc{Full}}
\;-\;
A_{HH}^{-1}A_{HC}\,\big(\mathbf{w}_C^{\textsc{RC}}(\mu)-\mathbf{w}_C^{\textsc{Full}}\big).
\label{eq:interp-H}
\end{equation}
Moreover, $\mathbf{w}_\mu$ has eigenvalues $\frac{\mu}{a_i+\mu}\in[0,1)$, and thus
\begin{equation}
\begin{aligned}
\big\|\mathbf{w}_C^{\textsc{RC}}(\mu)-\mathbf{w}_C^{\textsc{Full}}\big\|
&\le
\frac{\mu}{a_{\min}+\mu}\,\|\Delta_C\|, \\
\big\|\mathbf{w}_C^{\textsc{RC}}(\mu)-\mathbf{w}_C^{(1)}\big\|
&\le
\frac{a_{\max}}{a_{\max}+\mu}\,\|\Delta_C\|.
\end{aligned}
\label{eq:disp-bounds1}
\end{equation}
where $\Delta_C \equiv \mathbf{w}_C^{(1)}-\mathbf{w}_C^{\textsc{Full}}$.
\end{lemma}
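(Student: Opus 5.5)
The plan is to write down the first-order conditions of the three quadratic problems \eqref{eq:full-ridge}, \eqref{eq:content-ridge}, \eqref{eq:rc-ridge} in block form, eliminate the history block through the Schur complement $A_0$ of \eqref{eq:schur}, and subtract the resulting content equations. Put $\mathbf{b}_C=\mathbb E_{\mathcal P}[\mathbf{x}_C y]$ and $\mathbf{b}_H=\mathbb E_{\mathcal P}[\mathbf{x}_H y]$. The full ridge solution satisfies
\[
A_{CC}\mathbf{w}_C+A_{CH}\mathbf{w}_H=\mathbf{b}_C,\qquad A_{HC}\mathbf{w}_C+A_{HH}\mathbf{w}_H=\mathbf{b}_H .
\]
The RC objective is also strictly convex. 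Its normal equations are the same, except that the content equation becomes $(A_{CC}+\mu I)\mathbf{w}_C+A_{CH}\mathbf{w}_H=\mathbf{b}_C+\mu\mathbf{w}_C^{(1)}$. Note that $\mathbf{w}_C^{(1)}$ only enters as a fixed vector, so its explicit form $A_{CC}^{-1}\mathbf{b}_C$ is not needed.

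\textbf{Step 1 (history block).} The history equation is the same in both problems: $\mathbf{w}_H=A_{HH}^{-1}(\mathbf{b}_H-A_{HC}\mathbf{w}_C)$. Here $A_{HH}\succ0$ because $\lambda>0$. Subtracting the full-ridge version from the RC version gives \eqref{eq:interp-H} directly.

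\textbf{Step 2 (content block).} Substitute this expression for $\mathbf{w}_H$ into the content equations. The full problem becomes $A_0\mathbf{w}_C^{\textsc{Full}}=\mathbf{b}_C-A_{CH}A_{HH}^{-1}\mathbf{b}_H$. The RC problem becomes $(A_0+\mu I)\mathbf{w}_C^{\textsc{RC}}=\mathbf{b}_C-A_{CH}A_{HH}^{-1}\mathbf{b}_H+\mu\mathbf{w}_C^{(1)}$. Subtracting gives
\[
(A_0+\mu I)\big(\mathbf{w}_C^{\textsc{RC}}-\mathbf{w}_C^{\textsc{Full}}\big)=\mu\big(\mathbf{w}_C^{(1)}-\mathbf{w}_C^{\textsc{Full}}\big).
\]
The left factor is invertible since $A_0\succ0$, and inverting it yields \eqref{eq:interp}. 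Uniqueness of both minimizers comes from strict convexity, which justifies identifying the minimizers with the solutions of these linear systems.

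\textbf{Step 3 (spectral bounds).} Diagonalize the symmetric matrix $A_0=U\,\mathrm{diag}(a_i)U^\top$. Then $\mathbf{w}_\mu=\mu(A_0+\mu I)^{-1}$ has eigenvalues $\mu/(a_i+\mu)\in[0,1)$, since $a_i>0$. Its operator norm is therefore $\mu/(a_{\min}+\mu)$, which gives the first bound in \eqref{eq:disp-bounds1}.

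For the second bound, write
\[
\mathbf{w}_C^{\textsc{RC}}-\mathbf{w}_C^{(1)}=-(I-\mathbf{w}_\mu)\Delta_C=-A_0(A_0+\mu I)^{-1}\Delta_C .
\]
This matrix has eigenvalues $a_i/(a_i+\mu)$. The map $a\mapsto a/(a+\mu)$ is increasing, so the largest eigenvalue is $a_{\max}/(a_{\max}+\mu)$, which gives the second bound.

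The only delicate point is bookkeeping. The statement overloads the symbol $\mathbf{w}_\mu$ for a matrix, and the bounds must be taken in operator norm with the correct monotonicity of $a\mapsto a/(a+\mu)$ and $\mu\mapsto\mu/(a+\mu)$. I expect no real obstacle beyond verifying that $A_0$ is symmetric, so that its eigenvalue bounds translate into norm bounds.
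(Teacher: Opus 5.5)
Your proposal is correct and follows the paper's proof essentially step for step. Both arguments write the block normal equations, eliminate the history block through the Schur complement $A_0$, and apply spectral calculus on the symmetric matrix $A_0$ to get the two displacement bounds. The only cosmetic difference is in the content block: you subtract the two reduced equations to obtain $(A_0+\mu I)(\mathbf{w}_C^{\textsc{RC}}-\mathbf{w}_C^{\textsc{Full}})=\mu\Delta_C$ directly, whereas the paper solves for $\mathbf{w}_C^{\textsc{RC}}$ and then uses $(A_0+\mu I)^{-1}A_0=I-\mu(A_0+\mu I)^{-1}$. Your strict-convexity remark, which identifies the minimizers with the solutions of the linear systems, is a small justification the paper leaves implicit.
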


Lemma~\ref{lem:interp} makes explicit that $\mu$ traces a continuous path between
the full-model content coefficients ($\mu\!=\!0$) and the content-only coefficients ($\mu\!\to\!\infty$),
modulated by the conditioning of the Schur complement $A_0$.

We evaluate models on two distributions:
(i) the logged source $\mathcal P(\pi_t)$, and (ii) a strategic target $\mathcal Q$ that assigns higher mass to cold/under-exposed inventory.
To isolate the core cold-start shift in a clean form, we analyze the canonical target where
\begin{equation}
\mathbf{x}_H \equiv 0 \quad \text{a.s. under } \mathcal Q,
\label{eq:coldQ}
\end{equation}
i.e., historical belief signals are absent/unreliable in the target population.
This corresponds to ``frozen-start'' evaluation in our neural experiments and to the sparse-history segment in MSLR.

For squared loss under $\mathcal D$, the excess risk is
\(
\mathcal E_{\mathcal D}(\mathbf{w})=\mathbb E_{\mathcal D}[(\mathbf{x}^\top(\mathbf{w}-\beta))^2]
=(\mathbf{w}-\beta)^\top \Sigma_{\mathcal D}(\mathbf{w}-\beta).
\)

\subsection{Main theorem: Pareto tradeoff}
\label{sec:theory-main}

\begin{theorem}[Anchored ridge yields a quantified Pareto tradeoff]
\label{thm:pareto}
Assume $A_0\succ 0$ with eigenvalues in $[a_{\min},a_{\max}]$ and consider the cold target \eqref{eq:coldQ}.
Let $\Delta_C=\mathbf{w}_C^{(1)}-\mathbf{w}_C^{\textsc{Full}}$.
Define the source and target curvature constants
\[
L_{\mathcal P}\equiv \lambda_{\max}(\Sigma_{\mathcal P}),
\qquad
L_{\mathcal Q}\equiv \lambda_{\max}(\Sigma_{\mathcal Q,CC}),
\]
and the history-coupling factor
\(
\kappa_H \equiv 1 + \|A_{HH}^{-1}A_{HC}\|_{\text{op}}^2.
\)
Then for all $\mu\ge 0$ the following hold:

\paragraph{(A) Bounded regret to \textsc{Full} on the source $\mathcal P$.}
\begin{equation}
\big|\mathcal E_{\mathcal P}(\mathbf{w}^{\textsc{RC}}(\mu))-\mathcal E_{\mathcal P}(\mathbf{w}^{\textsc{Full}})\big|
\;\le\;
L_{\mathcal P}\,\kappa_H\left(\frac{\mu}{a_{\min}+\mu}\right)^2 \|\Delta_C\|^2 .
\label{eq:bound-P}
\end{equation}

\paragraph{(B) Bounded regret to \textsc{Content} on the target $\mathcal Q$.}
Since $\mathbf{x}_H\equiv 0$ under $\mathcal Q$, only the content block matters and
\begin{equation}
\big|\mathcal E_{\mathcal Q}(\mathbf{w}^{\textsc{RC}}(\mu))-\mathcal E_{\mathcal Q}(\mathbf{w}^{\textsc{Content}})\big|
\;\le\;
L_{\mathcal Q}\left(\frac{a_{\max}}{a_{\max}+\mu}\right)^2 \|\Delta_C\|^2 .
\label{eq:bound-Q}
\end{equation}

\paragraph{(C) Improvement over \textsc{Full} on $\mathcal Q$ for large enough $\mu$.}
If the content-only model is better than the full model on the target by margin
\(
m_{\mathcal Q}\equiv \mathcal E_{\mathcal Q}(\mathbf{w}^{\textsc{Full}})-\mathcal E_{\mathcal Q}(\mathbf{w}^{\textsc{Content}})>0,
\)
then any $\mu$ satisfying
\begin{equation}
L_{\mathcal Q}\left(\frac{a_{\max}}{a_{\max}+\mu}\right)^2 \|\Delta_C\|^2 \;\le\; m_{\mathcal Q},
\label{eq:mu-large}
\end{equation}
guarantees $\mathcal E_{\mathcal Q}(\mathbf{w}^{\textsc{RC}}(\mu)) \le \mathcal E_{\mathcal Q}(\mathbf{w}^{\textsc{Full}})$.

\paragraph{(D) Improvement over \textsc{Content} on $\mathcal P$ for small enough $\mu$.}
If the full model is better than the content-only model on the source by margin
\(
m_{\mathcal P}\equiv \mathcal E_{\mathcal P}(\mathbf{w}^{\textsc{Content}})-\mathcal E_{\mathcal P}(\mathbf{w}^{\textsc{Full}})>0,
\)
then any $\mu$ satisfying
\begin{equation}
L_{\mathcal P}\,\kappa_H\left(\frac{\mu}{a_{\min}+\mu}\right)^2 \|\Delta_C\|^2 \;\le\; m_{\mathcal P}
\label{eq:mu-small}
\end{equation}
guarantees $\mathcal E_{\mathcal P}(\mathbf{w}^{\textsc{RC}}(\mu)) \le \mathcal E_{\mathcal P}(\mathbf{w}^{\textsc{Content}})$.
\end{theorem}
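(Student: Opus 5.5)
The plan is to reduce every part of Theorem~\ref{thm:pareto} to a quadratic expansion of the excess risk around a reference estimator, and to control the displacement with Lemma~\ref{lem:interp}.

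\textbf{Displacement.} Write $\delta\equiv\mathbf{w}^{\textsc{RC}}(\mu)-\mathbf{w}^{\textsc{Full}}=(\delta_C,\delta_H)$. By \eqref{eq:interp}, $\delta_C=\mathbf{w}_\mu\Delta_C$, and by \eqref{eq:interp-H}, $\delta_H=-A_{HH}^{-1}A_{HC}\delta_C$. Hence
\[
\|\delta\|^2=\|\delta_C\|^2+\|\delta_H\|^2\le\kappa_H\|\delta_C\|^2\le\kappa_H\Big(\tfrac{\mu}{a_{\min}+\mu}\Big)^2\|\Delta_C\|^2,
\]
using the first bound in \eqref{eq:disp-bounds1}. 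On the target side only the content block matters, since $\mathbf{x}_H\equiv0$ under $\mathcal Q$. There we set $\delta'\equiv\mathbf{w}_C^{\textsc{RC}}(\mu)-\mathbf{w}_C^{(1)}=-(I-\mathbf{w}_\mu)\Delta_C$. The matrix $I-\mathbf{w}_\mu=A_0(A_0+\mu I)^{-1}$ has eigenvalues $a_i/(a_i+\mu)\le a_{\max}/(a_{\max}+\mu)$, so $\|\delta'\|\le\frac{a_{\max}}{a_{\max}+\mu}\|\Delta_C\|$, which is the second bound in \eqref{eq:disp-bounds1}.

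\textbf{Quadratic expansion.} For any reference $\mathbf{w}'$ and displacement $\delta$,
\[
\mathcal E_{\mathcal D}(\mathbf{w}'+\delta)-\mathcal E_{\mathcal D}(\mathbf{w}')=\delta^\top\Sigma_{\mathcal D}\delta+2(\mathbf{w}'-\beta)^\top\Sigma_{\mathcal D}\delta .
\]
The quadratic term is bounded by $L_{\mathcal P}\|\delta\|^2$ for (A) and by $L_{\mathcal Q}\|\delta'\|^2$ for (B). Combined with the displacement bounds, this gives exactly the right-hand sides of \eqref{eq:bound-P} and \eqref{eq:bound-Q}.

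\textbf{Main obstacle: the linear cross term.} To handle the cross term I will use first-order optimality. $\mathbf{w}^{\textsc{Full}}$ is the stationary point of the ridge objective $J_{\mathcal P}(\mathbf{w})=\mathcal E_{\mathcal P}(\mathbf{w})+\lambda\|\mathbf{w}\|^2$. Therefore
\[
J_{\mathcal P}(\mathbf{w}^{\textsc{RC}})-J_{\mathcal P}(\mathbf{w}^{\textsc{Full}})=\delta^\top(\Sigma_{\mathcal P}+\lambda I)\delta
\]
holds exactly, and the cross term in $\mathcal E_{\mathcal P}$ equals $-2\lambda(\mathbf{w}^{\textsc{Full}})^\top\delta$. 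This term vanishes as $\lambda\to0$, and in general it is absorbed if the regret in (A) is measured in the regularized risk (replacing $L_{\mathcal P}$ by $L_{\mathcal P}+\lambda$). For (B) the analogous route requires the Stage~1 solution to be (near-)stationary for the content risk on $\mathcal Q$. Without that, the cross term is of order $2\|\Sigma_{\mathcal Q,CC}^{1/2}(\mathbf{w}_C^{(1)}-\beta_C)\|\,L_{\mathcal Q}^{1/2}\|\delta'\|$, which still vanishes as $\mu\to\infty$. I expect making this term disappear (or stating it explicitly) to be where the argument needs the most care. I will first try to establish the bounds exactly in the regime where the respective reference is a stationary point, and otherwise carry the linear term explicitly as an additive correction.

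\textbf{Parts (C) and (D).} Given (A) and (B), the remaining parts follow from the triangle inequality. For (C),
\[
\mathcal E_{\mathcal Q}(\mathbf{w}^{\textsc{RC}})\le\mathcal E_{\mathcal Q}(\mathbf{w}^{\textsc{Content}})+L_{\mathcal Q}\Big(\tfrac{a_{\max}}{a_{\max}+\mu}\Big)^2\|\Delta_C\|^2\le\mathcal E_{\mathcal Q}(\mathbf{w}^{\textsc{Content}})+m_{\mathcal Q}=\mathcal E_{\mathcal Q}(\mathbf{w}^{\textsc{Full}})
\]
under \eqref{eq:mu-large}. Part (D) is symmetric, using (A) together with \eqref{eq:mu-small} and $m_{\mathcal P}$. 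Both thresholds are monotone in $\mu$: the ratio $\frac{a_{\max}}{a_{\max}+\mu}$ decreases in $\mu$, so \eqref{eq:mu-large} holds for all large $\mu$; the ratio $\frac{\mu}{a_{\min}+\mu}$ increases in $\mu$, so \eqref{eq:mu-small} holds for all small $\mu$. This yields the Pareto interpretation.
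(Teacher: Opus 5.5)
You identify exactly the right obstacle, but the proposal never resolves it, and it cannot be resolved: \eqref{eq:bound-P} and \eqref{eq:bound-Q} are false as stated. Your displacement bounds control only the quadratic term, i.e.\ the prediction disagreement $\delta^\top\Sigma_{\mathcal D}\delta$. The cross term is first order in the displacement, so it beats the displayed second-order right-hand sides for small $\mu$ in (A) and for large $\mu$ in (B).

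Here is a concrete example. Let $d_C=d_H=1$, let $\Sigma_{\mathcal P}$ have unit variances and correlation $1/2$, and take $\lambda=1$, $\beta=(1,1)$, $\Sigma_{\mathcal Q,CC}=1$. Then $\mathbf w^{\textsc{Full}}=(0.6,0.6)$, $\mathbf w_C^{(1)}=0.75$, $A_0=15/8$ and $L_{\mathcal P}\kappa_H=1.59375$. Put $\epsilon=0.15\,\mu/(15/8+\mu)$ and $s=(15/8)/(15/8+\mu)$.
\begin{itemize}
\item Source: $\mathcal E_{\mathcal P}(\mathbf w^{\textsc{RC}}(\mu))-\mathcal E_{\mathcal P}(\mathbf w^{\textsc{Full}})=-0.9\,\epsilon+0.8125\,\epsilon^2$, against the claimed bound $1.59375\,\epsilon^2$. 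This is violated for every $\mu>0$.
\item Target: $\mathcal E_{\mathcal Q}(\mathbf w^{\textsc{RC}}(\mu))-\mathcal E_{\mathcal Q}(\mathbf w^{\textsc{Content}})=0.075\,s+0.0225\,s^2$, against the claimed bound $0.0225\,s^2$. This is violated for every $\mu\ge0$.
\end{itemize}
Your ``stationary regime'' is degenerate: it requires $\lambda\mathbf w^{\textsc{Full}}=0$ for (A) and $\mathcal E_{\mathcal Q}(\mathbf w^{\textsc{Content}})=0$ for (B). As a result, the step ``given (A) and (B)'' in your derivation of (C) and (D) relies on inequalities that do not hold.

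This is not only a defect of your route; (C) itself can fail. Take two independent pairs $(x_{C,k},x_{H,k})$, $k=1,2$, with variances $(100,1)$ and $(1,1)$ and within-pair covariance $1/2$. Take $\lambda=0.1$, so $A_0\approx\mathrm{diag}(99.87,0.87)$, and $\Sigma_{\mathcal Q,CC}=I$. Choose $\beta$ so that $\mathbf w_C^{(1)}-\beta_C=(1,1)$ and $\Delta_C=(-2,1)$; this is possible because the map $\beta\mapsto(\mathbf w_C^{(1)}-\beta_C,\Delta_C)$ is invertible here. Then $m_{\mathcal Q}=9-2=7>5\ge L_{\mathcal Q}(\frac{a_{\max}}{a_{\max}+\mu})^2\|\Delta_C\|^2$ for every $\mu$. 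Yet $\mathcal E_{\mathcal Q}(\mathbf w^{\textsc{RC}}(3))\approx9.25>9=\mathcal E_{\mathcal Q}(\mathbf w^{\textsc{Full}})$. The reason is that the two eigendirections of $A_0$ relax at very different rates, so $\mathbf w_C^{\textsc{RC}}(\mu)$ leaves the segment $[\mathbf w_C^{\textsc{Full}},\mathbf w_C^{(1)}]$, on which convexity would have rescued (C).

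What is provable is the theorem with the cross terms kept, and that is all the paper's own proof delivers. It uses the reverse triangle inequality $|\sqrt{\mathcal E_{\mathcal D}(u)}-\sqrt{\mathcal E_{\mathcal D}(v)}|\le\sqrt{(u-v)^\top\Sigma_{\mathcal D}(u-v)}$, which is your Cauchy--Schwarz ``additive correction''. This produces an extra term $2\sqrt{\mathcal E_{\mathcal D}(v)}\,\sqrt{(u-v)^\top\Sigma_{\mathcal D}(u-v)}$ in (A) and (B). For (C) it gives the threshold $\sqrt{L_{\mathcal Q}}\frac{a_{\max}}{a_{\max}+\mu}\|\Delta_C\|\le\sqrt{\mathcal E_{\mathcal Q}(\mathbf w^{\textsc{Full}})}-\sqrt{\mathcal E_{\mathcal Q}(\mathbf w^{\textsc{Content}})}$, and analogously for (D). Since $(\sqrt a-\sqrt b)^2\le a-b$, these thresholds are strictly more demanding than \eqref{eq:mu-large} and \eqref{eq:mu-small} unless the reference risk vanishes. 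The paper's appendix calls the stated conditions ``more conservative'', which is backwards.

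Your approach genuinely adds something on the source side. The exact identity $J_{\mathcal P}(\mathbf w^{\textsc{RC}})-J_{\mathcal P}(\mathbf w^{\textsc{Full}})=\delta^\top(\Sigma_{\mathcal P}+\lambda I)\delta$ gives (A) in precisely the displayed form for the regularized risk, with $L_{\mathcal P}+\lambda$ in place of $L_{\mathcal P}$. In that metric (D) even holds for every $\mu$, because $\mathbf w^{\textsc{Content}}$ is feasible in \eqref{eq:rc-ridge} with zero anchor penalty. There is no analogous identity for (B): there $\mathbf w^{(1)}$ is not stationary for $\mathcal E_{\mathcal Q}$, and the target regret generically decays only to first order in $\frac{a_{\max}}{a_{\max}+\mu}$.
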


\paragraph{Interpretation (how to read the constants).}
The theorem formalizes a \emph{Pareto tradeoff} between source and target risk as $\mu$ varies.
The conditioning of the Schur complement $A_0$ (via $a_{\min},a_{\max}$) governs how quickly the RC solution moves
from \textsc{Full} ($\mu\!=\!0$) toward \textsc{Content} ($\mu\!\to\!\infty$).
If content and history are perfectly collinear under $\mathcal P$, then $A_0\to 0$ and no curriculum can uniquely recover content effects.
The factor $\kappa_H$ captures how strongly history coefficients respond to changes in content coefficients
(via cross-covariance between $\mathbf{x}_C$ and $\mathbf{x}_H$).
Finally, the margins $m_{\mathcal P},m_{\mathcal Q}$ encode the empirical tension:
\textsc{Full} typically wins on $\mathcal P$ (history is predictive), while \textsc{Content} can win on $\mathcal Q$ (history is absent/unreliable).

\begin{corollary}[An explicit $\mu$-interval and a Pareto curve]
\label{cor:mu-interval}
Let $\Delta\equiv \|\Delta_C\|$.
If $m_{\mathcal Q}>0$ and $m_{\mathcal P}>0$, then any $\mu$ satisfying both \eqref{eq:mu-large} and \eqref{eq:mu-small}
simultaneously improves over \textsc{Full} on $\mathcal Q$ and over \textsc{Content} on $\mathcal P$.
Equivalently, the feasible interval is
\begin{equation}
\begin{aligned}
\big\|\mathbf{w}_C^{\textsc{RC}}(\mu)-\mathbf{w}_C^{\textsc{Full}}\big\|
&\le
\frac{\mu}{a_{\min}+\mu}\,\|\Delta_C\|, \\
\big\|\mathbf{w}_C^{\textsc{RC}}(\mu)-\mathbf{w}_C^{(1)}\big\|
&\le
\frac{a_{\max}}{a_{\max}+\mu}\,\|\Delta_C\|.
\end{aligned}
\label{eq:disp-bounds2}
\end{equation}
Moreover, \eqref{eq:bound-P}--\eqref{eq:bound-Q} quantify the Pareto curve:
as $\mu$ increases, target regret decays as $\big(\frac{a_{\max}}{a_{\max}+\mu}\big)^2$,
while source regret grows as $\big(\frac{\mu}{a_{\min}+\mu}\big)^2$.
\end{corollary}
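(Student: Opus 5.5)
The corollary follows by intersecting parts (C) and (D) of Theorem~\ref{thm:pareto}; the displacement bounds come straight from Lemma~\ref{lem:interp}.

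\textbf{Step 1: intersecting the sufficient conditions.} Fix $m_{\mathcal Q}>0$ and $m_{\mathcal P}>0$, and take any $\mu$ satisfying both \eqref{eq:mu-large} and \eqref{eq:mu-small}.
\begin{itemize}
  \item Part (C) gives $\mathcal E_{\mathcal Q}(\mathbf{w}^{\textsc{RC}}(\mu))\le \mathcal E_{\mathcal Q}(\mathbf{w}^{\textsc{Full}})$.
  \item Part (D) gives $\mathcal E_{\mathcal P}(\mathbf{w}^{\textsc{RC}}(\mu))\le \mathcal E_{\mathcal P}(\mathbf{w}^{\textsc{Content}})$.
\end{itemize}
This proves the first claim.

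\textbf{Step 2: the interval in $\mu$.} I would make the feasible set explicit by solving each scalar inequality.
\begin{itemize}
  \item The map $\mu\mapsto a_{\max}/(a_{\max}+\mu)$ is decreasing. So \eqref{eq:mu-large} is equivalent to $\mu\ge \mu_-\equiv a_{\max}\big(\Delta\sqrt{L_{\mathcal Q}/m_{\mathcal Q}}-1\big)_+$.
  \item The map $\mu\mapsto \mu/(a_{\min}+\mu)$ is increasing. So \eqref{eq:mu-small} is equivalent to $\mu\le \mu_+\equiv a_{\min}\, r/(1-r)$, where $r=\sqrt{m_{\mathcal P}/(L_{\mathcal P}\kappa_H)}/\Delta$. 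If $r\ge 1$, then $\mu_+=\infty$.
\end{itemize}
The feasible set is therefore the interval $[\mu_-,\mu_+]$, which may be empty.

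\textbf{Step 3: the displayed displacement bounds.} The display \eqref{eq:disp-bounds2} is literally \eqref{eq:disp-bounds1}. I would cite Lemma~\ref{lem:interp}, whose content is as follows.
\begin{itemize}
  \item $\mathbf{w}_C^{\textsc{RC}}(\mu)-\mathbf{w}_C^{\textsc{Full}}=\mu(A_0+\mu I)^{-1}\Delta_C$.
  \item $\mathbf{w}_C^{\textsc{RC}}(\mu)-\mathbf{w}_C^{(1)}=-A_0(A_0+\mu I)^{-1}\Delta_C$.
  \item Since $A_0$ is symmetric, both operators are diagonal in its eigenbasis. Their eigenvalues $\mu/(a_i+\mu)$ and $a_i/(a_i+\mu)$ are maximized at $a_{\min}$ and $a_{\max}$ respectively.
\end{itemize}

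\textbf{Step 4: the Pareto-curve statement.} This is read off from \eqref{eq:bound-P}--\eqref{eq:bound-Q}, together with the monotonicity used in Step 2. The target regret bound decays as $\big(a_{\max}/(a_{\max}+\mu)\big)^2$. The source regret bound grows as $\big(\mu/(a_{\min}+\mu)\big)^2$.

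\textbf{Main obstacle.} The substantive work lies in Theorem~\ref{thm:pareto}, which I take as given. The only delicate point here is interpretive. The word ``equivalently'' should be read as the explicit interval $[\mu_-,\mu_+]$ together with the displacement bounds, since the display itself restates the lemma rather than an interval. Non-emptiness of the interval is not guaranteed in general. I would state non-emptiness as the condition $\mu_-\le\mu_+$ rather than claim it.
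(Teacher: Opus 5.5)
Your proposal is correct and is essentially the paper's proof: the improvement claim is the intersection of parts (C) and (D) of Theorem~\ref{thm:pareto}, the display is a verbatim restatement of \eqref{eq:disp-bounds1} from Lemma~\ref{lem:interp}, and the Pareto rates are read off the monotone right-hand sides. The paper cites the disagreement bounds \eqref{eq:DP-bound} and \eqref{eq:DQ-bound} for this, but they have the same right-hand sides as \eqref{eq:bound-P}--\eqref{eq:bound-Q}. Your explicit endpoints $\mu_-,\mu_+$ and your refusal to claim non-emptiness improve on the paper: its proof says a feasible interval exists whenever both margins are positive, yet (C) and (D) only give $\mu\ge\mu_-$ and $\mu\le\mu_+$ separately, and nothing in the hypotheses compares the two endpoints.
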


Assumption \eqref{eq:coldQ} captures the cleanest cold-start shift (belief signals absent).
More generally, the target $\mathcal Q$ may differ from $\mathcal P$ in the joint distribution of
$(\text{context},\text{item},y)$ (e.g., the platform strategically upweights contexts where cold inventory is valuable),
and $\mathbf{x}_H$ may be present but less correlated with $y$ than under $\mathcal P$.
The same analysis extends by replacing $\Sigma_{\mathcal Q,CC}$ and the ``$\mathbf{x}_H\equiv 0$'' simplification
with the appropriate covariance structure; we provide the general form and proof details in Appendix~\ref{app: main-theorem}.

We additionally derive a conservative risk bound for GBDTs in Appendix~\ref{app: gbdt-bound}, showing that RC inherits the Stage~1 anchor's target performance up to a correction term controlled by Stage~2 capacity, yielding a verifiable sufficient condition for improving over a belief-dominant baseline on $\mathcal Q$.

\section{Experiments}
\label{sec:experiments}
We evaluate RC as a mechanism for controlling reliance on exposure-dependent historical belief signals in learning-to-rank (LTR) and response prediction.
We consider three representative experimental settings: (i) a public LTR benchmark with explicit behavioral features (MSLR-WEB10K), (ii) a public recommendation benchmark with endogenous interaction aggregates (MovieLens-20M), and (iii) a large-scale production A/B test in sponsored product search.
Across all settings, we test the same core hypothesis:
\emph{delaying access to historical belief signals during training improves robustness on cold and strategic segments while preserving strong overall performance.}

\subsection{MSLR-WEB10K: RC for LTR}
\label{sec:exp-mslr}
MSLR-WEB10K is a large-scale supervised learning-to-rank benchmark released by Microsoft Research \cite{mslrweb2010,qin2013letor4}.
Each instance is a query--URL pair with graded relevance in $\{0,1,2,3,4\}$ and a 136-dimensional feature vector; we follow the official five-fold splits.

\paragraph{Content-based merit vs.\ historical belief signals.}
A notable aspect of MSLR-WEB10K is that it includes explicit behavioral features alongside query--document match features.
Following the dataset documentation \cite{mslrweb2010}, we treat features $1$--$133$ as exposure-independent \emph{content-based merit} features and the last three features ($134$--$136$) as exposure-dependent \emph{historical belief} features derived from user interactions:
\begin{equation}
\mathbf{x}_{\merit} \equiv (x_1, \dots, x_{133}),
\qquad
\mathbf{x}_{\belief} \equiv (x_{134}, x_{135}, x_{136}).
\label{eq:mslr-split}
\end{equation}

\paragraph{Cold vs.\ warm query--URL pairs.}
Because MSLR-WEB10K does not provide impression logs, we operationalize ``cold'' using sparsity in behavioral features.
In our main experiments, we define a query--URL pair as \textbf{cold} if $x_{136}=0$ (zero dwell time), which yields a large cold segment: across the five folds, $74$--$75\%$ of test instances are cold.


\paragraph{Models and curriculum.}
We train LightGBM LambdaMART models with an NDCG objective using $2000$ trees, $63$ leaves, learning rate $0.05$, and row/feature subsampling of $0.8$.
For GBDTs, RC is implemented as a two-stage boosting schedule: we first fit $M$ trees using merit-only features (masking $\mathbf{x}_{\belief}$), then fit the remaining $2000{-}M$ trees using all features.
We sweep $M\in\{0,100,\dots,2000\}$ and report overall and cold NDCG@3 and AUC; results for other ranking metrics (NDCG@1, MAP@1, MAP@3) follow the same pattern.

\paragraph{Experimental protocol.}
All MSLR-WEB10K results are averaged over the official five folds with $200$ independent repeats per fold ($1000$ runs total); we report means and use significance testing at the $0.01$
level.    
\begin{figure}[t]
  \centering
  \vspace{-1mm}
  \includegraphics[width=\columnwidth]{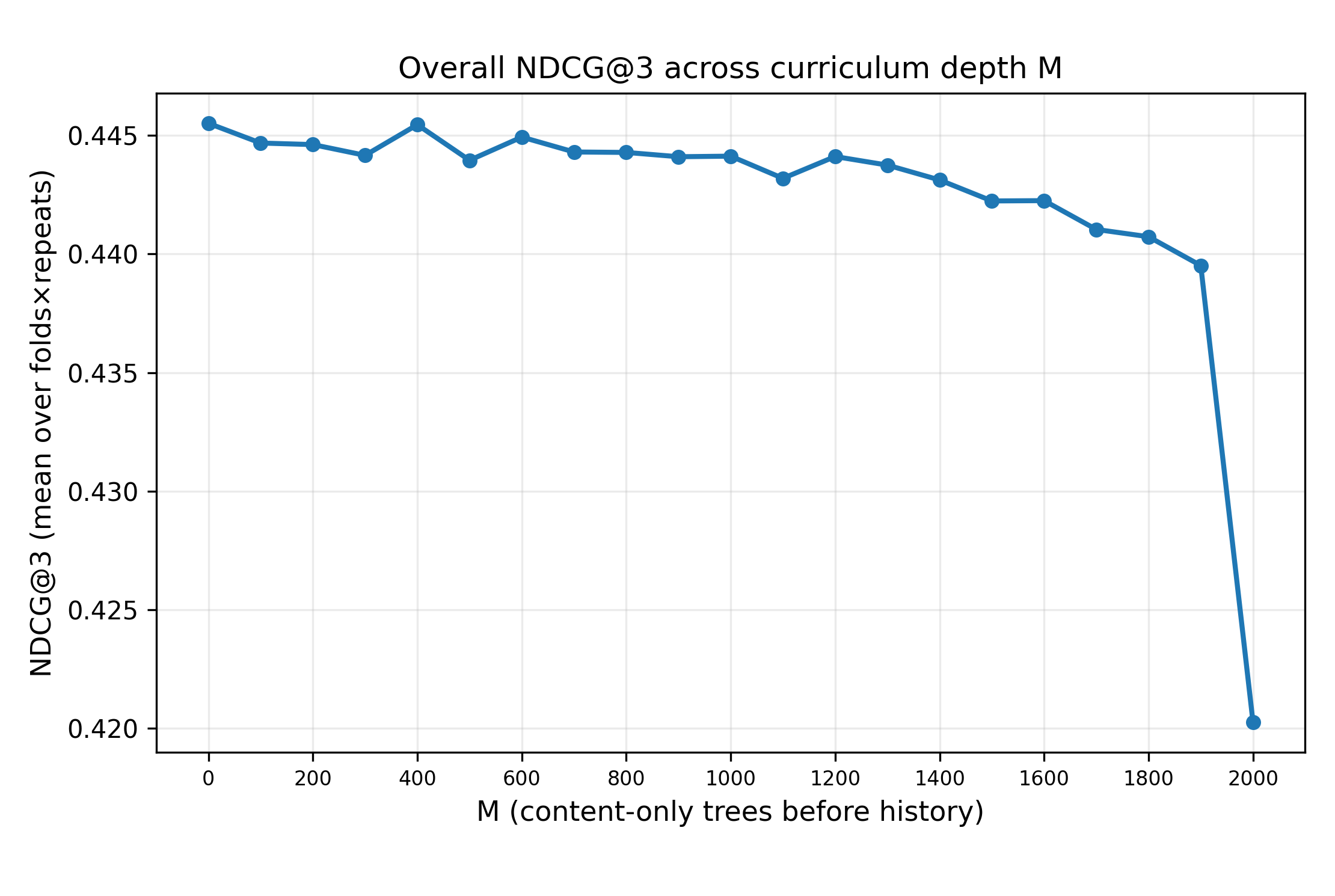}
  \vspace{-5mm}
  \caption{MSLR: overall NDCG@3 vs.\ curriculum depth $M$.}
  \label{fig:mslr-overall-ndcg3}
\end{figure}
\begin{figure}[t]
  \centering
  \vspace{-5mm}
  \includegraphics[width=\columnwidth]{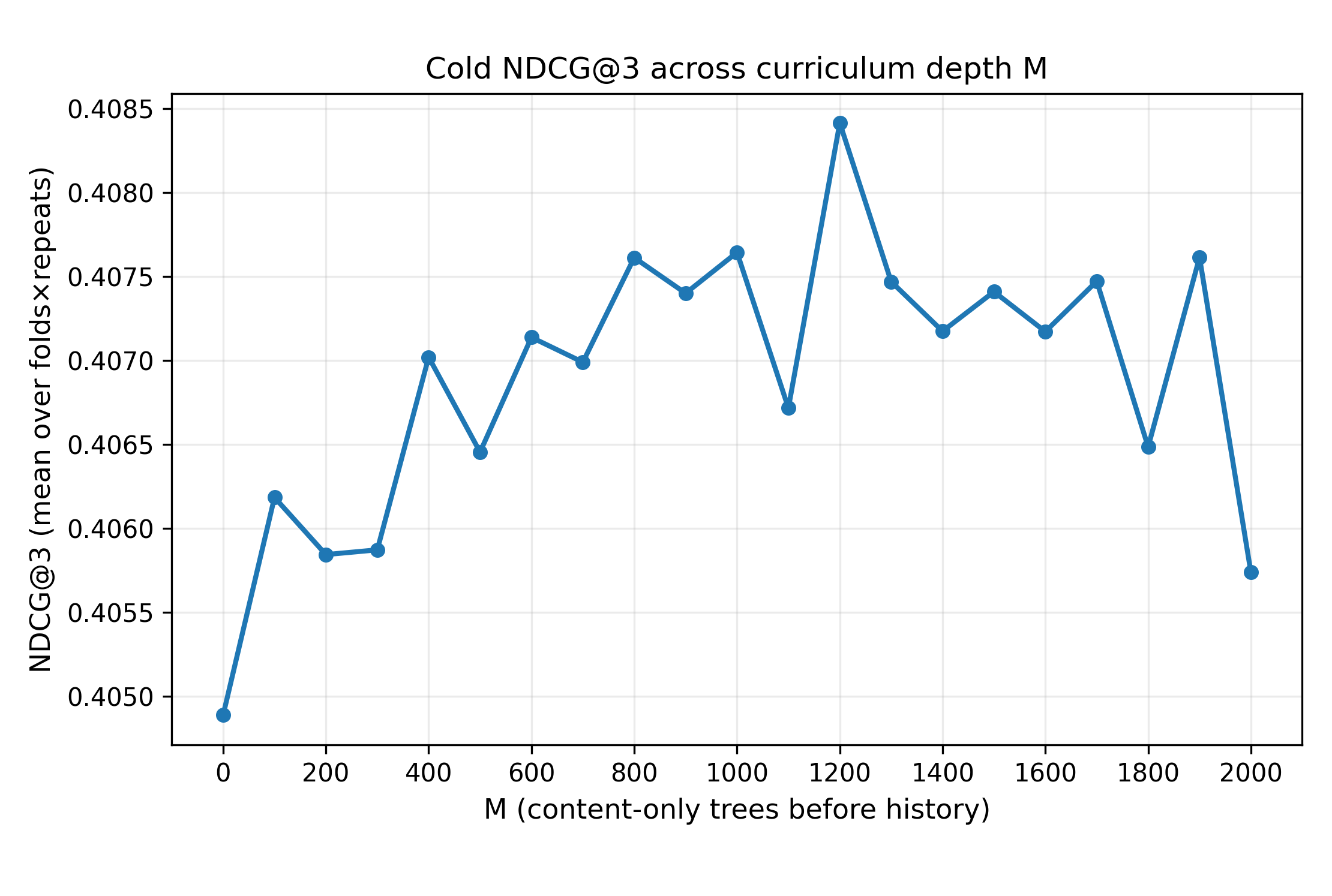}
  \vspace{-3mm}
  \caption{MSLR: cold NDCG@3 vs.\ curriculum depth $M$.}
  \label{fig:mslr-cold-ndcg3}
  \vspace{-3mm}
\end{figure}
\paragraph{Results and interpretation.}
RC consistently reduces reliance on belief features as $M$ increases; gain-based importance trends are reported in Figure~\ref{fig:mslr-importance}.
The NDCG curves exhibit the two-distribution tradeoff predicted by Theorem~\ref{thm:pareto}.
On cold instances (our empirical proxy for $\mathcal Q$, where belief signals are absent or attenuated), content-only training ($M{=}2000$) outperforms the full model ($M{=}0$) on average across five folds and five repeats, and intermediate curricula further improve cold NDCG (Figure~\ref{fig:mslr-cold-ndcg3}).
On the overall test distribution (proxy for $\mathcal P$), the full model outperforms the content-only baseline, and overall performance declines as $M\to 2000$ (Figure~\ref{fig:mslr-overall-ndcg3}).
Thus $M$ acts as a robustness--accuracy knob: compared to the full model, RC improves cold-start ranking while keeping overall performance close; compared to content-only training, RC preserves much higher overall quality while retaining most of the cold-start gains.
 
\paragraph{Reliance reduction via feature importance.}             Figure~\ref{fig:mslr-importance} tracks the gain-based importance of the historical belief features \texttt{F134}--\texttt{F136} as $M$ increases. In the full model ($M{=}0$), \texttt{F134} is by far the most important feature. Under RC its rank drops from $1$ at $M{=}0$ to $4$ at $M{=}100$, $8$ at $M{=}400$, and $14$ at $M{=}1000$, with the same monotone trend holding for \texttt{F135} and \texttt{F136}.                                               \begin{figure}[ht]                                                                 \centering                                                                 
    \includegraphics[width=\columnwidth]{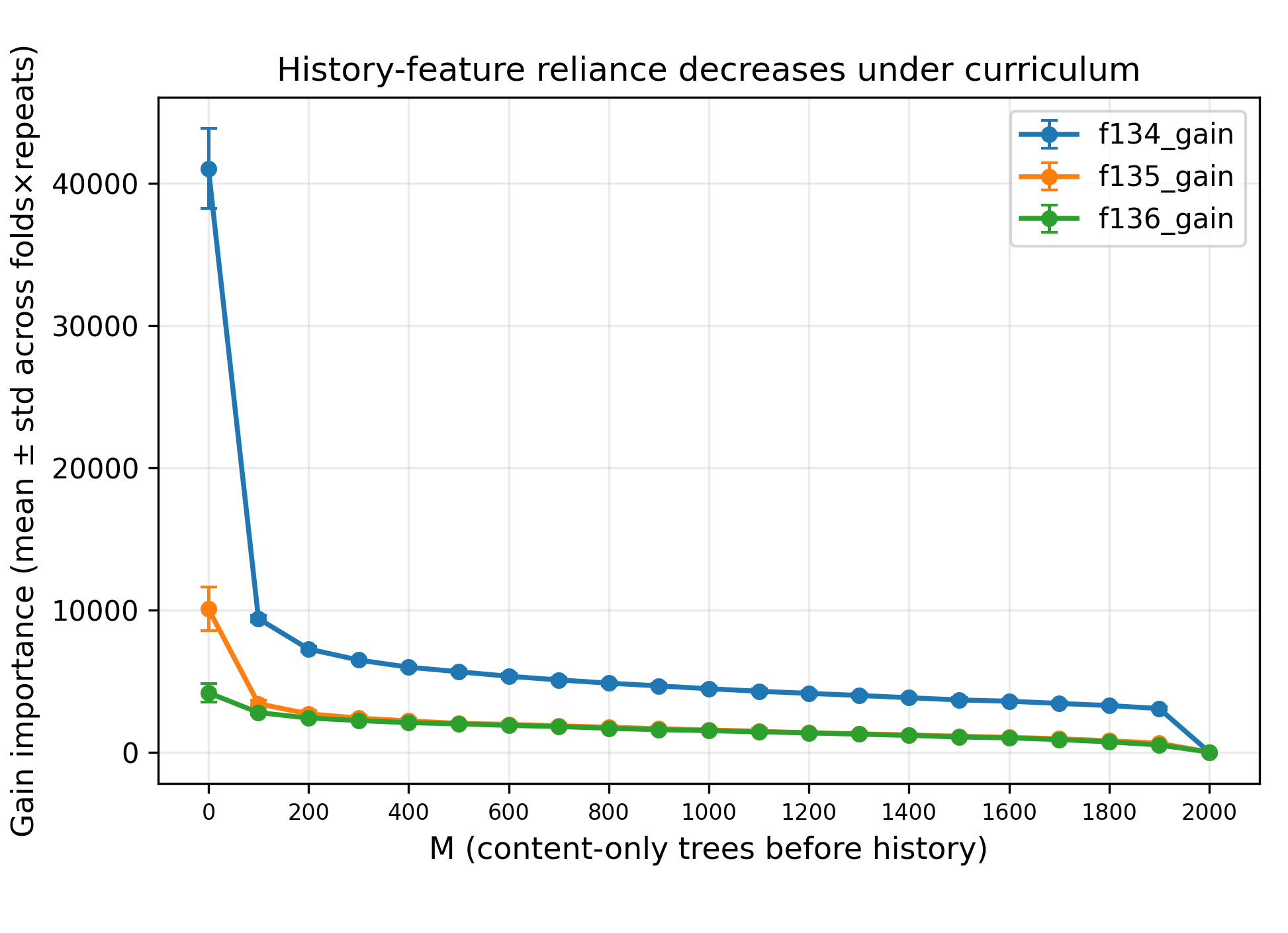}
    \caption{MSLR: gain-based importance of historical belief features \texttt{F134}--\texttt{F136} vs.\ $M$ (mean $\pm$ 1 std over 1000 runs).} 
    \label{fig:mslr-importance}                                              
  \end{figure}       

 \paragraph{Comparison with column subsampling (feature masking).}         Table~\ref{tab:colsample} shows that varying \texttt{colsample\_bytree} $\in\{0.6,0.7,0.8,0.9\}$ leaves the dominant feature gain in the $36$k--$44$k range and moves cold NDCG@3 only   
marginally ($0.4049$--$0.4053$), while overall NDCG@3 stays essentially at the full-model level ($0.4455$--$0.4456$). RC reduces the same feature's gain to $10{,}025$ ($M{=}100$) and     
$4{,}667$ ($M{=}1000$), lifting cold NDCG@3 to $0.4061$ and $0.4076$ respectively. The gap confirms that semantics-aware staging is the key driver: column subsampling (feature masking) 
does not distinguish content-based from history-based features and therefore cannot target shortcut reliance.                                                                              
\begin{table}[ht]              
\centering\small
\caption{RC vs.\ column subsampling (feature masking) on MSLR-WEB10K. RC achieves substantially stronger reliance reduction and cold-start gains.}                                         
\label{tab:colsample}            
\begin{tabular}{lccc}            
\toprule                         
Method & F134 gain & Cold NDCG@3 & Overall NDCG@3 \\                         
\midrule                                                                     
Full ($M{=}0$)     & 43,867          & 0.4049          & \textbf{0.4456} \\           
\texttt{col}$=0.9$ & 42,550          & 0.4053          & 0.4454 \\           
\texttt{col}$=0.8$ & 37,345          & 0.4051          & 0.4451 \\           
\texttt{col}$=0.7$ & 36,411          & 0.4049          & 0.4448 \\           
\midrule                                                                     
RC ($M{=}100$)     & \textbf{10,025} & \textbf{0.4061} & 0.4447 \\           
RC ($M{=}1000$)    & \textbf{4,667}  & \textbf{0.4076} & 0.4441 \\             \bottomrule                      
\end{tabular}                    
\end{table}         
\paragraph{AUC}                                        
Figures~\ref{fig:mslr-auc} and~\ref{fig:mslr-cold-auc} report overall and cold AUC. The general trend observed for NDCG holds for AUC as well: Representation Curriculum (RC) preserves    
strong overall AUC while improving cold AUC relative to the full model. Since the underlying LambdaMART base model is optimized for NDCG via a pairwise loss, AUC exhibits larger variance; accordingly, the full model can achieve higher overall AUC than the content-only model even on cold instances, which does not contradict our main findings.  
\begin{figure}[ht]                                                             \centering                                                                  
\includegraphics[width=\columnwidth]{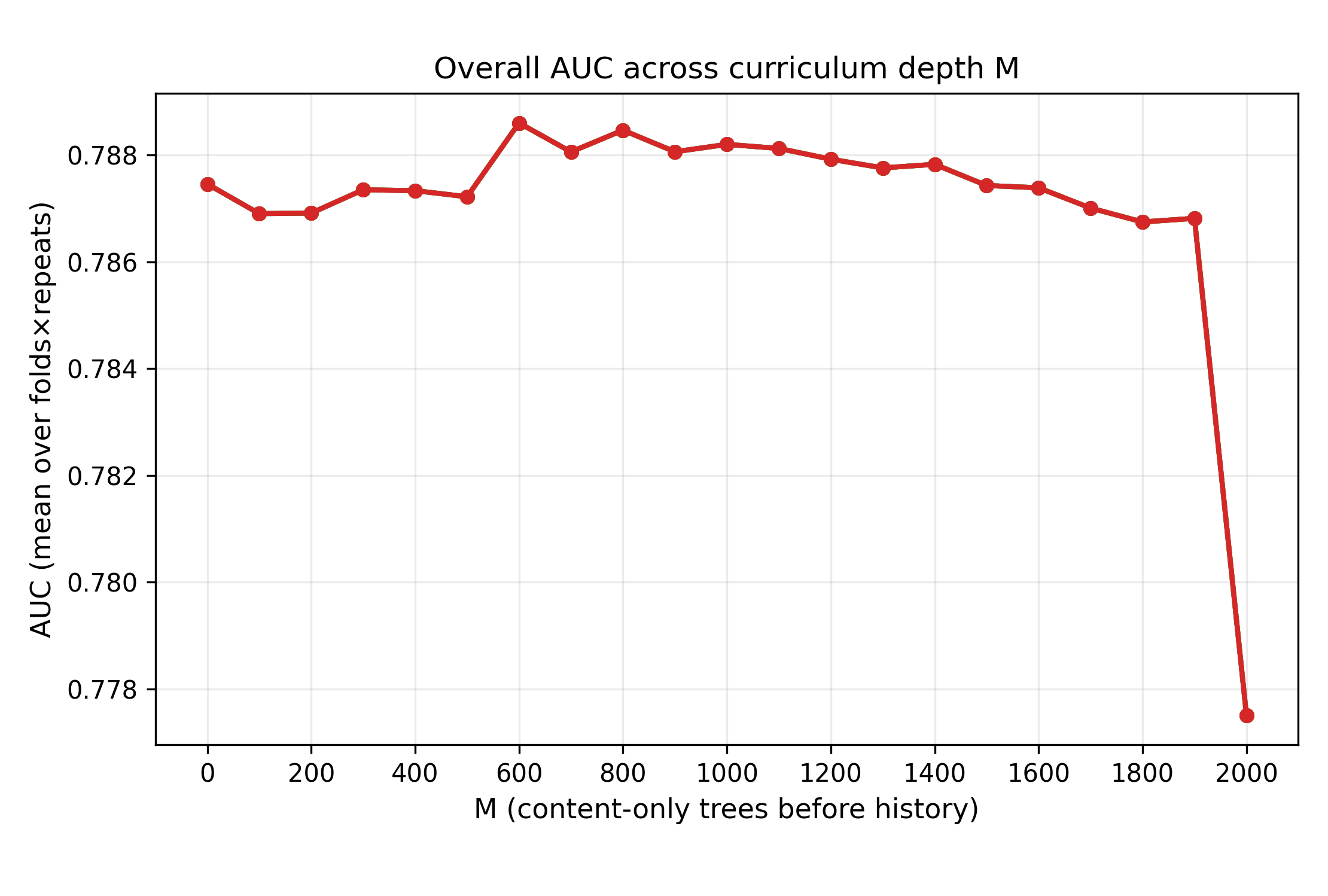}               
\caption{MSLR: overall AUC vs.\ curriculum depth $M$.}                    
\label{fig:mslr-auc}                                                      
\end{figure}                                                                
\begin{figure}[ht]                                                          
\centering                                                                
\includegraphics[width=\columnwidth]{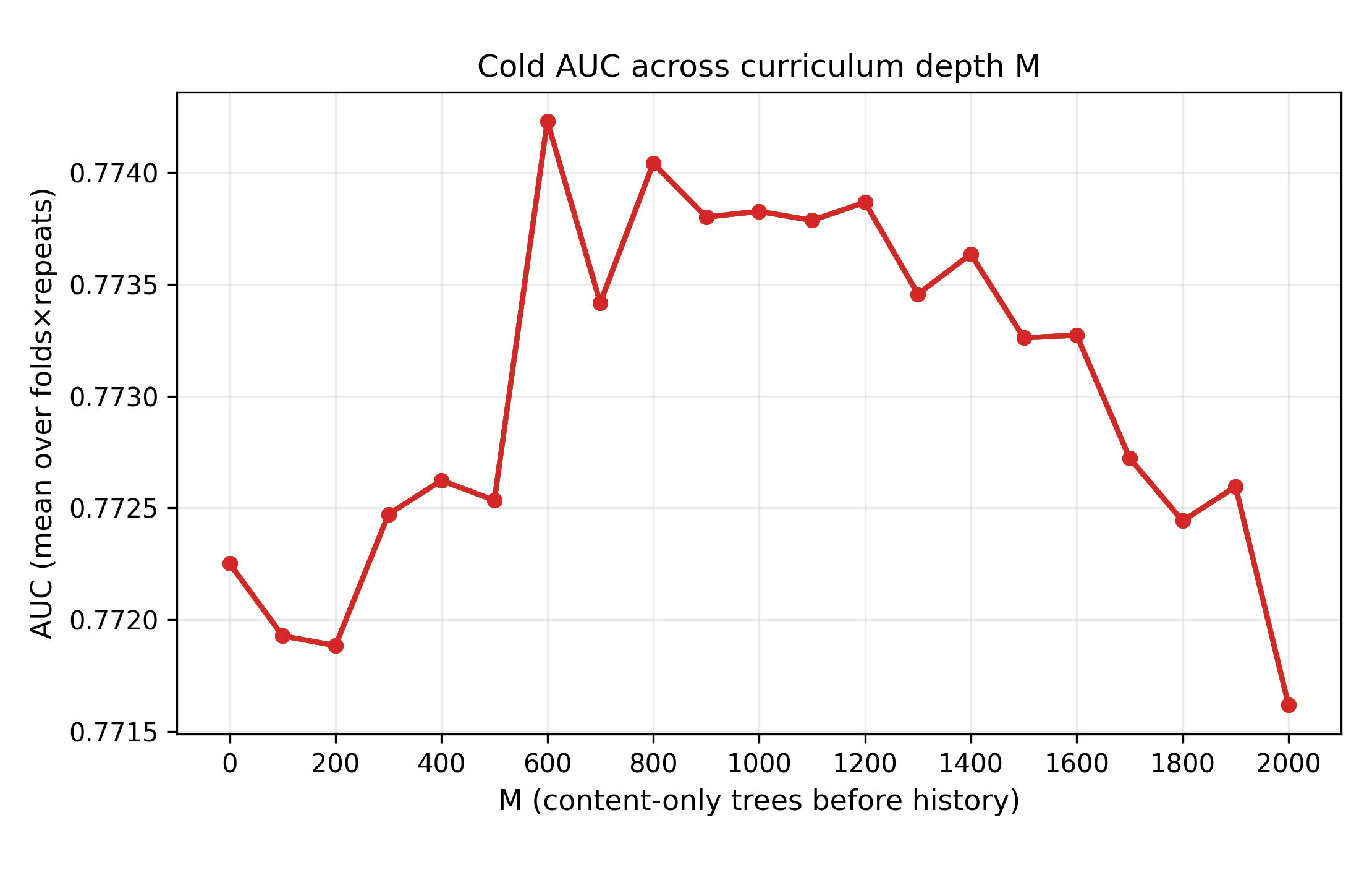}         
\caption{MSLR: cold AUC vs.\ curriculum depth $M$.}                       
\label{fig:mslr-cold-auc}                                                 
\end{figure}                              

\subsection{MovieLens-20M: RC in a neural recommendation setting}
\label{sec:exp-movielens}
We next evaluate representation curriculum on a public recommendation benchmark with rich \emph{item content} and naturally evolving \emph{historical interaction evidence}.
We use MovieLens-20M (timestamped ratings) and treat each interaction event $(u,i,t,r)$ as a supervised example.
While MovieLens is not collected under a controlled deployment policy, it provides a convenient testbed to (i) instantiate historical belief signals as \emph{endogenous, exposure-dependent aggregates} and (ii) study how curriculum changes reliance on such aggregates under controlled ``cold'' interventions.

\paragraph{Task and split.}
We use a \emph{time-based split} to respect the causality of history features: interactions are sorted by timestamp, with the last segment held out for test and a preceding segment for validation.
All history features for an event are computed strictly from interactions \emph{prior} to its timestamp.
We focus on \textbf{classification} with $y=\mathrm{1}\{r \ge 4\}$ (``like''), which is standard for implicit-feedback style evaluation and supports AUC and log-loss diagnostics.

\paragraph{Content-based merit signals (exposure-independent).}
We define item merit features from content text (movie title and genres).
Our main configuration uses sentence embeddings, particularly Sentence BERT\cite{reimers2019sentence} (\texttt{all-MiniLM-L6-v2}) reduced to dimension $d$ and concatenated with lightweight metadata (genres as tokens). We also considered representations based on TF-IDF and mean pooling of token embeddings.
On the user side, we consider two representations:
(i) \texttt{id} uses a user ID embedding;
(ii) \texttt{id+profile} augments the ID embedding with a \emph{train-only profile} computed by mean-pooling the content embeddings of items the user interacted with in training.
The profile captures stable preference structure without relying on item-level exposure aggregates at inference time.

\paragraph{Historical belief signals (exposure-dependent).}
We build item-level belief features using \emph{empirical-Bayes stabilized} estimates from interactions prior to time $t$.
For each event $(u,i,t)$, we form an EB posterior for the Bernoulli label (like/dislike) based on prior outcomes for item $i$ and extract: the posterior mean (belief), a clipped log-count (evidence strength), and an uncertainty proxy (e.g., posterior variance).
We use a content-based prior (nearest-neighbor on item content embeddings) with pseudo-count $\alpha$ (e.g., $\alpha=10$), mirroring how industrial systems stabilize sparse belief signals with content-conditioned priors.

\paragraph{Model class.}
We use the two-tower architecture with MLP head, discussed in section \ref{sec:method}.
Historical belief signals are injected through a dedicated scalar pathway (``history head'') that modulates the final logit.
This decomposition aligns with our semantic split: merit pathways are available for cold items, while belief pathways are only meaningful when sufficient interaction evidence exists.

\paragraph{Representation curriculum and anchoring.}
We train in two stages.
Stage~1 learns a merit-only model by masking historical belief signals, including item-ID-specific components that do not generalize to unseen items.
Stage~2 introduces belief signals and trains the full model while \emph{anchoring} to the Stage~1 solution.
We implement two complementary anchors: (i) a prediction-consistency term (distillation) that keeps the Stage~2 model close to the Stage~1 outputs under merit-only inputs, and (ii) a parameter anchor on the merit pathway parameters (towers and merit-related MLP blocks) to prevent Stage~2 from washing out learned merit structure.
We tie anchoring strengths and sweep $\mu_1,\mu_2 \in \{0, 0.01, 0.1\}$.

\paragraph{Evaluations}
MovieLens admits multiple ``cold'' notions (e.g., low training interaction count or low event-time evidence), but these slices can be brittle because missing ratings are not true non-exposures.
To make cold-start generalization \emph{causally interpretable} and robust, we report \textbf{frozen-start} evaluations that explicitly remove belief features at inference time:
\emph{(a)} \texttt{frozen-zero}: set belief features to their default/zero values and replace unseen item-ID components with OOV;
\emph{(b)} \texttt{frozen-prior}: set belief features to their EB prior values (no evidence).
These interventions isolate the contribution of learned merit structure from belief shortcuts. We report overall and Frozen start AUC as key evaluation metrics.

\paragraph{Results summary.}
Given consistent observations across multiple configurations, we report results from a representative setting and differ full report on ablations to subsequent reproducibility studies. 
The \textsc{Full} model achieves the best in-distribution test AUC ($0.678$) but collapses under frozen start, dropping to AUC $0.580$ when belief features are removed at inference.
A two-stage curriculum \emph{without} anchoring already improves robustness via a warm-start effect: initializing Stage~2 from the merit-only anchor (RC with $\mu_1=\mu_2=0$) raises frozen-start AUC to $0.620$ (with test AUC $0.669$).
Prediction anchoring is the main knob that makes frozen-start behavior match the content model: RC with $\mu_1=0.1$ and $\mu_2=0$ reaches frozen-start AUC $0.661$, essentially matching the content-only baseline (AUC $0.662$), while keeping test AUC $0.664$.
This is the intended behavior-shaping effect: Stage~2 leverages historical belief signals where available, while preserving a high-quality merit pathway that remains predictive when belief is absent. 

\subsection{Production A/B Test: Sponsored Product Search in a Dynamic Marketplace}
\label{sec:abtest}
We evaluate representation curriculum (RC) in a production sponsored product search system at a major e-commerce marketplace.
Following the marketplace-evaluation abstraction of \cite{ebrahimzadeh2024mev}, we view the ranker as an \emph{allocation mechanism} that assigns exposure to items, and measure relevant exposure and utility metrics that are measurable via short horizon randomized experiments.
Our goal is to test the paper's central claim: \emph{semantics-aware control of the optimization trajectory}, implemented by delaying access to a highly predictive exposure-dependent signal, reduces over-reliance on historic belief signals, increases exposure for items with limited historical evidence, and improves outcomes on strategic cold inventory segments, while keeping aggregate marketplace KPIs neutral.

\paragraph{Policy, baseline, and treatment.}
The baseline is a standard ``full'' ranker trained on the complete feature set, including both exposure-independent \emph{merit} signals and exposure-dependent \emph{belief} signals (history- and ID-derived signals), with no curriculum.
The RC treatment uses the same training data, objective, and model capacity as the baseline, but alters the training \emph{trajectory} by masking a single highly predictive \emph{ID-based belief feature} during the first training stage.
This feature acts as a strong shortcut because it summarizes item-specific historic exposure and observed transaction evidence and collapses to a prior when an item has no historical purchases; therefore, it disproportionately advantages items with established exposure and conversions.
In the second stage, we unmask this feature and continue training with the full feature set.
(For confidentiality, we omit platform-specific details such as the exact feature definition and thresholds.)

\paragraph{Offline diagnostics (behavioral effect of RC)}
Before online experimentation, we validate that RC changes feature reliance in the intended direction.
Compared to the baseline, RC reduces the measured importance of the masked belief feature by more than $70\%$ (gain-based importance), with no material degradation in overall offline ranking metrics.
On an offline cold slice defined by items with historical impressions below a threshold $\tau$, RC yields a lift of $>1\%$ in offline utility metrics, motivating the online test.

\paragraph{Online experiment design and metrics.}
We run a controlled randomized A/B test over production traffic for approximately two weeks, comparing the RC policy to the baseline.
We report \emph{relative lifts} in (i) exposure metrics and (ii) outcome metrics, evaluated on stratified inventory segments designed to isolate cold and strategic supply.
Exposure is measured as the number of times an item is shown in search results (aggregated over placements), while outcomes are measured via \emph{sale velocity}: the fraction (or count) of listings that obtain at least one transaction within a fixed window of $Y$ days from listing/publish time.\footnote{We use sale velocity because it is sensitive to marketplace coverage and newcomer success, and it captures a long-run objective beyond per-impression click/purchase rates.}
We measured aggregate marketplace KPIs (e.g., transactions and revenue), which remained neutral over the test window.

\paragraph{Results and Discussions}
Tables~\ref{tab:exposure-newItems} and~\ref{tab:Salevelocity-NewItems} summarize the primary observed effects in the AB test.
The RC policy increases exposure for items with weak purchase history, particularly newly listed inventory, while preserving overall business outcomes.

\begin{table}[t]
\centering
\begin{tabular}{@{}lc@{}}
\toprule
Inventory segment & Search exposure lift \\
\midrule
No historic impressions & neutral \\
No historic purchases & $+0.33\%$ \\
No historic purchases \& listing age $< X$ & $+0.50\%$ \\
\bottomrule
\end{tabular}
\caption{Relative lift in exposure under RC vs.\ baseline for cold inventory segments.}
\label{tab:exposure-newItems}
\end{table}

\begin{table}[t]
\centering
\begin{tabular}{@{}lc@{}}
\toprule
Items converted within $Y$ days & Transaction-volume lift \\
\midrule
Total & $+2.25\%$ \\
Listing age $< X$ & $+1.31\%$ \\
Listed by new listers & $+4.35\%$ \\
\bottomrule
\end{tabular}
\caption{Relative lift in sale velocity under RC vs.\ baseline on strategic inventory segments.}
\label{tab:Salevelocity-NewItems}
\end{table}
We make a few remarks to help interpret the magnitude and direction of the observed effects.
First, the intervention masks only one strong ID-based belief feature; other historic belief signals, including uncertainty-aware historical estimates (e.g., empirical-Bayes style aggregates), remain available, which can attenuate shifts in the most extreme ``no-impression'' segments but still meaningfully affects ``no-purchase'' segments where belief features can otherwise dominate.
Second, exposure is reported across all placements while the treatment changes only sponsored ranking; the resulting reported lifts are directionally diluted, and conservative.
Third, we observe that exposure increases not only for cold items but also for items with very large historical transaction volume (above a threshold $Z$), alongside reduced exposure for moderately established items; qualitatively, RC appears to reduce reliance on a single strong shortcut and reallocates exposure away from a narrow band of historically advantaged inventory.
Finally, while aggregate marketplace KPIs remain neutral, sale velocity improves, aligning with a mechanism-design objective in dynamic marketplaces: increase the probability that new and under-exposed listings earn early exposure and can demonstrate merit through realized transactions.
Some segment-level outcome lifts are directionally positive but estimated with larger uncertainty due to smaller subpopulation mass; we therefore emphasize the robust pattern across exposure and sale-velocity metrics rather than over-interpreting any single fine-grained slice.
\subsection{Reproducibility.}
Code, notebooks, and experiment configurations for MSLR-WEB10K and MovieLens-20M are available at \url{https://github.com/sinaBaharlouei/CurriculumRepresentation}, including scripts to reproduce the reported metrics and figures.
Due to privacy and platform constraints, production data and code are not released; we report aggregated A/B results and provide sufficient implementation details to replicate RC on analogous marketplace data.

\section{Conclusion}
\label{sec:conclusion}
Exposure-dependent historic belief signals are powerful predictors in ranking and recommendation, but over-reliance on them can yield brittle generalization on strategically important sub-populations. We introduced \textbf{Representation Curriculum} (RC), a semantics-aware optimization-trajectory intervention that trains a content pathway first (masking belief signals) and then enables all signals while anchoring the learned content representation. Our linear analysis provides a closed-form characterization and a quantified Pareto tradeoff between performance on the logged source distribution and robustness in cold start regimes. Empirically, we show across multiple settings that RC reduces reliance on historic features and improves the performance cold-segment without material impact on overall performance.

Future work includes (i) combining feature curricula with standard instance-based curricula by choosing stage-specific training distributions to learn intent affinity/content-based merit broadly before specializing with belief signals on (converting) contexts where belief signals are most informative (ii) extending guarantees beyond frozen start and (iii) advancing a mechanism-design view of \emph{controllable} allocation policies by developing explicit knobs and diagnostics for policy properties, such as exposure elasticity to popularity, coverage targets, and amortized exposure constraints aligned with marketplace welfare.
\newpage
\bibliographystyle{ACM-Reference-Format}
\bibliography{sample-base}

\appendix

\section{Proofs for Section~\ref{sec:theory}}
\label{app: main-theorem}
This appendix provides full proofs for Lemma~\ref{lem:interp}, Theorem~\ref{thm:pareto}, and
Corollary~\ref{cor:mu-interval}. We follow the notation of Section~\ref{sec:theory}:

$x=(x_C,x_H)\in\mathbb{R}^{d_C+d_H}$, $y=x_C^\top\beta_C+x_H^\top\beta_H+\varepsilon$ with
$\mathbb{E}[\varepsilon\mid x]=0$, and $\Sigma_{\mathcal D}=\mathbb{E}_{\mathcal D}[xx^\top]$.

\subsection{Population Normal Equation}
For any distribution $\mathcal D$ and any $w=(w_C,w_H)$, define the squared-loss risk
$R_{\mathcal D}(w)=\mathbb{E}_{\mathcal D}[(y-x_C^\top w_C-x_H^\top w_H)^2]$.
Using the model assumptions and $\mathbb{E}[x\varepsilon]=0$, we have
\begin{equation}
R_{\mathcal D}(w)=\sigma^2 + (w-\beta)^\top \Sigma_{\mathcal D}(w-\beta).
\label{eq:risk-decomp}
\end{equation}
Particularly, the \emph{excess risk} in Section~\ref{sec:theory} is
$\mathcal{E}_{\mathcal D}(w)=(w-\beta)^\top \Sigma_{\mathcal D}(w-\beta)$.

\paragraph{Full ridge under $\mathcal P$.}
The objective in \eqref{eq:full-ridge} is $R_{\mathcal P}(w)+\lambda\|w\|_2^2$.
Differentiating and setting the gradient to zero yields the population normal equations
\begin{equation}
(\Sigma_{\mathcal P}+\lambda I)\,w^{\textsc{Full}} \;=\; \mathbb{E}_{\mathcal P}[xy]
\;=\;\mathbb{E}_{\mathcal P}[xx^\top]\beta \;=\;\Sigma_{\mathcal P}\beta.
\label{eq:full-normal}
\end{equation}

\paragraph{Content-only ridge (Stage 1) under $\mathcal P$.}
Similarly, differentiating \eqref{eq:content-ridge} gives
\begin{equation}
(\Sigma_{\mathcal P,CC}+\lambda I)\,w_C^{(1)} \;=\; \mathbb{E}_{\mathcal P}[x_Cy]
\;=\; \Sigma_{\mathcal P,CC}\beta_C + \Sigma_{\mathcal P,CH}\beta_H.
\label{eq:stage1-normal}
\end{equation}

\paragraph{Anchored ridge (Stage 2) under $\mathcal P$.}
The objective in \eqref{eq:rc-ridge} is
$R_{\mathcal P}(w)+\lambda(\|w_C\|^2+\|w_H\|^2)+\mu\|w_C-w_C^{(1)}\|^2$.
Differentiating and setting the gradient to zero yields
\begin{equation}
(\Sigma_{\mathcal P}+\lambda I)\,w^{\textsc{RC}}(\mu) + \mu \begin{bmatrix} w_C^{\textsc{RC}}(\mu) \\ 0\end{bmatrix}
\;=\;
\Sigma_{\mathcal P}\beta + \mu \begin{bmatrix} w_C^{(1)} \\ 0\end{bmatrix}.
\label{eq:stage2-normal}
\end{equation}

\subsection{Proof of Lemma~\ref{lem:interp} (interpolation)}
\label{app:proof-interp}

\begin{proof}[Proof of Lemma~\ref{lem:interp}]
Write the (regularized) block covariances under $\mathcal P$ as in Section~\ref{sec:theory-closedform}:
\[
A_{CC}=\Sigma_{\mathcal P,CC}+\lambda I,\quad
A_{HH}=\Sigma_{\mathcal P,HH}+\lambda I,\quad
A_{CH}=\Sigma_{\mathcal P,CH},\quad A_{HC}=A_{CH}^\top.
\]
Let $b=\mathbb{E}_{\mathcal P}[xy]=\Sigma_{\mathcal P}\beta$ and write its blocks as
$b_C=\mathbb{E}_{\mathcal P}[x_Cy]$ and $b_H=\mathbb{E}_{\mathcal P}[x_Hy]$.
Then \eqref{eq:full-normal} is equivalent to the block system
\begin{equation}
\begin{aligned}
A_{CC}\,w_C^{\textsc{Full}} + A_{CH}\,w_H^{\textsc{Full}} &= b_C,\\
A_{HC}\,w_C^{\textsc{Full}} + A_{HH}\,w_H^{\textsc{Full}} &= b_H.
\end{aligned}
\label{eq:block-full}
\end{equation}
Since $A_{HH}\succ 0$, we can eliminate $w_H^{\textsc{Full}}=A_{HH}^{-1}(b_H-A_{HC}w_C^{\textsc{Full}})$ and obtain
\begin{equation}
A_0\,w_C^{\textsc{Full}} \;=\; b_C - A_{CH}A_{HH}^{-1}b_H,
\label{eq:full-schur}
\end{equation}
where $A_0=A_{CC}-A_{CH}A_{HH}^{-1}A_{HC}$ is the Schur complement \eqref{eq:schur}.

Next, the anchored system \eqref{eq:stage2-normal} can be written in blocks as
\begin{equation}
\begin{aligned}
(A_{CC}+\mu I)\,w_C^{\textsc{RC}}(\mu) + A_{CH}\,w_H^{\textsc{RC}}(\mu) &= b_C + \mu w_C^{(1)},\\
A_{HC}\,w_C^{\textsc{RC}}(\mu) + A_{HH}\,w_H^{\textsc{RC}}(\mu) &= b_H.
\end{aligned}
\label{eq:block-rc}
\end{equation}
Eliminating $w_H^{\textsc{RC}}(\mu)=A_{HH}^{-1}(b_H-A_{HC}w_C^{\textsc{RC}}(\mu))$ gives
\begin{equation}
(A_0+\mu I)\,w_C^{\textsc{RC}}(\mu)
\;=\;
b_C + \mu w_C^{(1)} - A_{CH}A_{HH}^{-1}b_H.
\label{eq:rc-schur}
\end{equation}
Comparing \eqref{eq:rc-schur} with \eqref{eq:full-schur}, we have
$b_C - A_{CH}A_{HH}^{-1}b_H = A_0 w_C^{\textsc{Full}}$, hence
\begin{equation}
(A_0+\mu I)\,w_C^{\textsc{RC}}(\mu) = A_0 w_C^{\textsc{Full}} + \mu w_C^{(1)}.
\end{equation}
Multiplying by $(A_0+\mu I)^{-1}$ yields
\[
w_C^{\textsc{RC}}(\mu)
= (A_0+\mu I)^{-1}A_0\,w_C^{\textsc{Full}} + (A_0+\mu I)^{-1}\mu w_C^{(1)}.
\]
Using $(A_0+\mu I)^{-1}A_0 = I-\mu(A_0+\mu I)^{-1}$, we obtain the interpolation identity
\[
w_C^{\textsc{RC}}(\mu)
= w_C^{\textsc{Full}} + \mu(A_0+\mu I)^{-1}\big(w_C^{(1)}-w_C^{\textsc{Full}}\big),
\]
which is \eqref{eq:interp} with $W_\mu=\mu(A_0+\mu I)^{-1}$.

For the history block, from \eqref{eq:block-rc} and \eqref{eq:block-full} we have
\[
w_H^{\textsc{RC}}(\mu)=A_{HH}^{-1}\!\big(b_H-A_{HC}w_C^{\textsc{RC}}(\mu)\big),
\qquad
w_H^{\textsc{Full}}=A_{HH}^{-1}\!\big(b_H-A_{HC}w_C^{\textsc{Full}}\big),
\]
and subtracting gives \eqref{eq:interp-H}:
\[
w_H^{\textsc{RC}}(\mu)-w_H^{\textsc{Full}} = -A_{HH}^{-1}A_{HC}\big(w_C^{\textsc{RC}}(\mu)-w_C^{\textsc{Full}}\big).
\]

Finally, since $A_0$ is symmetric with eigenvalues $\{a_i\}$, $W_\mu$ is a matrix function of $A_0$:
$W_\mu=f_\mu(A_0)$ with $f_\mu(a)=\mu/(a+\mu)$. Therefore $W_\mu$ has eigenvalues $f_\mu(a_i)\in[0,1)$.
Its operator norm is $\|W_\mu\|_{\mathrm{op}}=\max_i \mu/(a_i+\mu)=\mu/(a_{\min}+\mu)$, which implies
\[
\|w_C^{\textsc{RC}}(\mu)-w_C^{\textsc{Full}}\|
=\|W_\mu\Delta_C\|
\le \|W_\mu\|_{\mathrm{op}}\|\Delta_C\|
\le \frac{\mu}{a_{\min}+\mu}\|\Delta_C\|.
\]
Moreover,
$w_C^{\textsc{RC}}(\mu)-w_C^{(1)} = (W_\mu-I)\Delta_C = -A_0(A_0+\mu I)^{-1}\Delta_C$ and
$\|A_0(A_0+\mu I)^{-1}\|_{\mathrm{op}}=\max_i a_i/(a_i+\mu)=a_{\max}/(a_{\max}+\mu)$, giving the second inequality
in \eqref{eq:disp-bounds2}.
\end{proof}

\subsection{An Auxiliary Inequality: Excess Risk vs.\ Model Disagreement}
\label{app:risk-disagreement}

Theorem~\ref{thm:pareto} controls how far $w^{\textsc{RC}}(\mu)$ moves from $w^{\textsc{Full}}$ (on $\mathcal P$)
and from $w^{\textsc{Content}}$ (on $\mathcal Q$). A convenient quantity for this is the \emph{prediction disagreement}
\begin{equation}
\mathcal{D}_{\mathcal D}(u,v)\;:=\;\mathbb{E}_{\mathcal D}\!\left[(x^\top(u-v))^2\right]
\;=\;(u-v)^\top \Sigma_{\mathcal D}(u-v),
\label{eq:disagreement}
\end{equation}
which is exactly the expected squared change in predictions between $u$ and $v$ under $\mathcal D$.

\begin{lemma}[From disagreement to excess-risk regret]
\label{lem:disagreement-to-risk}
For any distribution $\mathcal D$ and any $u,v\in\mathbb{R}^{d_C+d_H}$,
\begin{equation}
\big|\sqrt{\mathcal{E}_{\mathcal D}(u)}-\sqrt{\mathcal{E}_{\mathcal D}(v)}\big|
\;\le\;\sqrt{\mathcal{D}_{\mathcal D}(u,v)}.
\label{eq:sqrt-lipschitz}
\end{equation}
Consequently,
\begin{equation}
\mathcal{E}_{\mathcal D}(u)
\;\le\;
\Big(\sqrt{\mathcal{E}_{\mathcal D}(v)}+\sqrt{\mathcal{D}_{\mathcal D}(u,v)}\Big)^2,
\label{eq:risk-upper}
\end{equation}
and symmetrically with $u$ and $v$ swapped. In particular,
\begin{equation}
\big|\mathcal{E}_{\mathcal D}(u)-\mathcal{E}_{\mathcal D}(v)\big|
\;\le\;
2\sqrt{\mathcal{E}_{\mathcal D}(v)}\,\sqrt{\mathcal{D}_{\mathcal D}(u,v)}
+\mathcal{D}_{\mathcal D}(u,v).
\label{eq:risk-diff-bound}
\end{equation}
\end{lemma}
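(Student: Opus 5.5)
The plan is to treat $\Sigma_{\mathcal D}$ as defining a (semi-)inner product and apply the reverse triangle inequality. Since $\Sigma_{\mathcal D}=\mathbb E_{\mathcal D}[xx^\top]\succeq 0$, the map $\|z\|_{\mathcal D}:=\sqrt{z^\top\Sigma_{\mathcal D}z}=\sqrt{\mathbb E_{\mathcal D}[(x^\top z)^2]}$ is a seminorm on $\mathbb R^{d_C+d_H}$. Here homogeneity is immediate. The triangle inequality follows from Cauchy--Schwarz for the positive semidefinite bilinear form $\langle z,z'\rangle_{\mathcal D}=z^\top\Sigma_{\mathcal D}z'$, or equivalently from Minkowski's inequality in $L^2(\mathcal D)$ applied to the random variables $x^\top z$. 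No invertibility of $\Sigma_{\mathcal D}$ is needed. This matters because under the cold target \eqref{eq:coldQ} the covariance $\Sigma_{\mathcal Q}$ is singular.

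In this notation $\mathcal E_{\mathcal D}(u)=\|u-\beta\|_{\mathcal D}^2$ and $\mathcal D_{\mathcal D}(u,v)=\|u-v\|_{\mathcal D}^2$. Writing $u-\beta=(v-\beta)+(u-v)$, the triangle inequality gives
\[
\|u-\beta\|_{\mathcal D}\le\|v-\beta\|_{\mathcal D}+\|u-v\|_{\mathcal D}.
\]
The same holds with $u,v$ swapped, which together yield \eqref{eq:sqrt-lipschitz}. Squaring the one-sided inequality (both sides are nonnegative) gives \eqref{eq:risk-upper}, and the swapped version follows identically.

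For \eqref{eq:risk-diff-bound}, set $a=\sqrt{\mathcal E_{\mathcal D}(u)}$, $b=\sqrt{\mathcal E_{\mathcal D}(v)}$ and $d=\sqrt{\mathcal D_{\mathcal D}(u,v)}$. Then $|a^2-b^2|=|a-b|\,(a+b)$. By \eqref{eq:sqrt-lipschitz} we have $|a-b|\le d$ and $a\le b+d$, so $a+b\le 2b+d$. Hence
\[
|a^2-b^2|\le d(2b+d)=2bd+d^2,
\]
which is exactly the claim.

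There is no real obstacle. The only point needing care is to justify the triangle inequality for a merely positive semidefinite $\Sigma_{\mathcal D}$. I would do this via the $L^2(\mathcal D)$ interpretation rather than a factorization $\Sigma_{\mathcal D}^{1/2}$, though both work.
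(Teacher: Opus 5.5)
Your proof is correct and follows the same route as the paper: both write $\sqrt{\mathcal E_{\mathcal D}}$ and $\sqrt{\mathcal D_{\mathcal D}}$ as the $\Sigma_{\mathcal D}$-seminorm and apply the forward and reverse triangle inequalities. The paper realizes this seminorm as $\|\Sigma_{\mathcal D}^{1/2}(\cdot)\|_2$, which also needs no invertibility, while you use $L^2(\mathcal D)$; in addition, your factorization $|a^2-b^2|=|a-b|(a+b)\le d(2b+d)$ handles both signs of the last inequality at once, which is slightly cleaner than the paper's ``expanding the square,'' since that expansion directly bounds only $\mathcal E_{\mathcal D}(u)-\mathcal E_{\mathcal D}(v)$.
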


\begin{proof}
Let $S_{\mathcal D}=\Sigma_{\mathcal D}^{1/2}$ be the symmetric square root.
Then $\sqrt{\mathcal{E}_{\mathcal D}(u)}=\|S_{\mathcal D}(u-\beta)\|_2$ and
$\sqrt{\mathcal{D}_{\mathcal D}(u,v)}=\|S_{\mathcal D}(u-v)\|_2$.
By the reverse triangle inequality,
\[
\big|\|S_{\mathcal D}(u-\beta)\|_2-\|S_{\mathcal D}(v-\beta)\|_2\big|
\le \|S_{\mathcal D}(u-v)\|_2,
\]
which proves \eqref{eq:sqrt-lipschitz}. Inequality \eqref{eq:risk-upper} follows from the (forward) triangle inequality:
$\|S(u-\beta)\|\le \|S(v-\beta)\|+\|S(u-v)\|$ and squaring both sides. Finally,
\eqref{eq:risk-diff-bound} follows from \eqref{eq:risk-upper} by expanding the square.
\end{proof}

\subsection{Proof of Theorem~\ref{thm:pareto}}
\label{app:proof-pareto}

\begin{proof}[Proof of Theorem~\ref{thm:pareto}]
We prove each item in turn, using Lemma~\ref{lem:interp} and the disagreement quantity \eqref{eq:disagreement}.

\paragraph{Step 1: A bound on the content displacement.}
Lemma~\ref{lem:interp} gives
$w_C^{\textsc{RC}}(\mu)-w_C^{\textsc{Full}} = W_\mu\Delta_C$ with
$\|W_\mu\|_{\mathrm{op}}\le \mu/(a_{\min}+\mu)$. Hence
\begin{equation}
\|w_C^{\textsc{RC}}(\mu)-w_C^{\textsc{Full}}\|
\le \frac{\mu}{a_{\min}+\mu}\|\Delta_C\|.
\label{eq:deltaC-full}
\end{equation}

\paragraph{Step 2: History responds linearly to content changes.}
Lemma~\ref{lem:interp} also gives
$w_H^{\textsc{RC}}(\mu)-w_H^{\textsc{Full}} = -A_{HH}^{-1}A_{HC}(w_C^{\textsc{RC}}(\mu)-w_C^{\textsc{Full}})$.
Let $B:=A_{HH}^{-1}A_{HC}$.
Then the full parameter displacement $\delta(\mu):=w^{\textsc{RC}}(\mu)-w^{\textsc{Full}}$ satisfies
\[
\delta(\mu)=\begin{bmatrix}\delta_C(\mu)\\ \delta_H(\mu)\end{bmatrix}
=
\begin{bmatrix}I\\ -B\end{bmatrix}\delta_C(\mu),
\qquad
\delta_C(\mu)=w_C^{\textsc{RC}}(\mu)-w_C^{\textsc{Full}}.
\]
Therefore,
\begin{eqnarray}
\|\delta(\mu)\|_2^2
&=&
\|\delta_C(\mu)\|_2^2+\|B\delta_C(\mu)\|_2^2\nonumber\\
&\le&
\big(1+\|B\|_{\mathrm{op}}^2\big)\,\|\delta_C(\mu)\|_2^2
=
\kappa_H\,\|\delta_C(\mu)\|_2^2,
\label{eq:delta-full-norm}
\end{eqnarray}
where $\kappa_H=1+\|A_{HH}^{-1}A_{HC}\|_{\mathrm{op}}^2$.

\paragraph{(A) Bounded regret to \textsc{Full} on the source $\mathcal P$.}
By definition,
\begin{align}
\mathcal{D}_{\mathcal P}(w^{\textsc{RC}}(\mu),w^{\textsc{Full}})
&=
\delta(\mu)^\top \Sigma_{\mathcal P}\,\delta(\mu)\nonumber\\
&\le \lambda_{\max}(\Sigma_{\mathcal P})\,\|\delta(\mu)\|_2^2
= L_{\mathcal P}\,\|\delta(\mu)\|_2^2.\nonumber    
\end{align}

Combining with \eqref{eq:delta-full-norm} and \eqref{eq:deltaC-full} yields
\begin{equation}
\mathcal{D}_{\mathcal P}(w^{\textsc{RC}}(\mu),w^{\textsc{Full}})
\le
L_{\mathcal P}\,\kappa_H\left(\frac{\mu}{a_{\min}+\mu}\right)^2\|\Delta_C\|^2.
\label{eq:DP-bound}
\end{equation}
To translate this into an excess-risk regret statement, apply Lemma~\ref{lem:disagreement-to-risk}
with $(u,v)=(w^{\textsc{RC}}(\mu),w^{\textsc{Full}})$:
\begin{align}
\big|\mathcal{E}_{\mathcal P}(w^{\textsc{RC}}(\mu))-\mathcal{E}_{\mathcal P}(w^{\textsc{Full}})\big|
\le
&2\sqrt{\mathcal{E}_{\mathcal P}(w^{\textsc{Full}})}\,\sqrt{\mathcal{D}_{\mathcal P}(w^{\textsc{RC}}(\mu),w^{\textsc{Full}})}\nonumber\\
&+
\mathcal{D}_{\mathcal P}(w^{\textsc{RC}}(\mu),w^{\textsc{Full}}),\nonumber
\end{align}
and substituting \eqref{eq:DP-bound} yields an explicit bound controlled by $\mu$.

\paragraph{(B) Bounded regret to \textsc{Content} on the target $\mathcal Q$.}
Under the cold target \eqref{eq:coldQ}, $x_H\equiv 0$ a.s., hence only the content block contributes to
prediction disagreement and excess risk.
Let $\tilde\delta_C(\mu):=w_C^{\textsc{RC}}(\mu)-w_C^{(1)}$.
From Lemma~\ref{lem:interp} we have $\tilde\delta_C(\mu)=-(I-W_\mu)\Delta_C=-A_0(A_0+\mu I)^{-1}\Delta_C$ and
\begin{equation}
\|\tilde\delta_C(\mu)\|
\le \frac{a_{\max}}{a_{\max}+\mu}\|\Delta_C\|.
\label{eq:deltaC-content}
\end{equation}
Therefore, with $\Sigma_{\mathcal Q,CC}$ the content covariance under $\mathcal Q$,
\begin{align}
\mathcal{D}_{\mathcal Q}(w^{\textsc{RC}}(\mu),w^{\textsc{Content}})
&=
\tilde\delta_C(\mu)^\top \Sigma_{\mathcal Q,CC}\,\tilde\delta_C(\mu)\nonumber\\
&\le
\lambda_{\max}(\Sigma_{\mathcal Q,CC})\,\|\tilde\delta_C(\mu)\|^2
=
L_{\mathcal Q}\,\|\tilde\delta_C(\mu)\|^2,\nonumber    
\end{align}

and by \eqref{eq:deltaC-content} we obtain
\begin{equation}
\mathcal{D}_{\mathcal Q}(w^{\textsc{RC}}(\mu),w^{\textsc{Content}})
\le
L_{\mathcal Q}\left(\frac{a_{\max}}{a_{\max}+\mu}\right)^2\|\Delta_C\|^2.
\label{eq:DQ-bound}
\end{equation}
Applying Lemma~\ref{lem:disagreement-to-risk} with $(u,v)=(w^{\textsc{RC}}(\mu),w^{\textsc{Content}})$ yields
an excess-risk regret bound of the same form (up to the additional factor involving
$\sqrt{\mathcal{E}_{\mathcal Q}(w^{\textsc{Content}})}$).

\paragraph{(C) Improvement over \textsc{Full} on $\mathcal Q$ for large enough $\mu$.}
Assume $m_{\mathcal Q}:=\mathcal{E}_{\mathcal Q}(w^{\textsc{Full}})-\mathcal{E}_{\mathcal Q}(w^{\textsc{Content}})>0$.
A sufficient condition for $\mathcal{E}_{\mathcal Q}(w^{\textsc{RC}}(\mu))\le \mathcal{E}_{\mathcal Q}(w^{\textsc{Full}})$ is
\begin{align}
\mathcal{E}_{\mathcal Q}(w^{\textsc{RC}}(\mu))-\mathcal{E}_{\mathcal Q}(w^{\textsc{Content}})
\le m_{\mathcal Q}.\nonumber
\end{align}
By Lemma~\ref{lem:disagreement-to-risk} with $v=w^{\textsc{Content}}$, we have
\begin{align}
&\mathcal{E}_{\mathcal Q}(w^{\textsc{RC}}(\mu))-\mathcal{E}_{\mathcal Q}(w^{\textsc{Content}})\nonumber\\
&\le
2\sqrt{\mathcal{E}_{\mathcal Q}(w^{\textsc{Content}})\mathcal{D}_{\mathcal Q}(w^{\textsc{RC}}(\mu),w^{\textsc{Content}})}
+
\mathcal{D}_{\mathcal Q}(w^{\textsc{RC}}(\mu),w^{\textsc{Content}}).\nonumber
\end{align}
Let $s(\mu):=\sqrt{\mathcal{D}_{\mathcal Q}(w^{\textsc{RC}}(\mu),w^{\textsc{Content}})}$.
The right-hand side is $s(\mu)^2+2\sqrt{\mathcal{E}_{\mathcal Q}(w^{\textsc{Content}})}\,s(\mu)$, which is at most $m_{\mathcal Q}$
whenever
\begin{equation}
s(\mu)\le
\sqrt{\mathcal{E}_{\mathcal Q}(w^{\textsc{Full}})}-\sqrt{\mathcal{E}_{\mathcal Q}(w^{\textsc{Content}})}.
\label{eq:mu-large-sufficient}
\end{equation}
Using the disagreement bound \eqref{eq:DQ-bound}, condition \eqref{eq:mu-large-sufficient} holds if
\[
\sqrt{L_{\mathcal Q}}\,\frac{a_{\max}}{a_{\max}+\mu}\,\|\Delta_C\|
\;\le\;
\sqrt{\mathcal{E}_{\mathcal Q}(w^{\textsc{Full}})}-\sqrt{\mathcal{E}_{\mathcal Q}(w^{\textsc{Content}})}.
\]
Rearranging yields an explicit lower bound on $\mu$.
In the cold-start regime where $\mathcal{E}_{\mathcal Q}(w^{\textsc{Content}})$ is small,
the right-hand side is close to $\sqrt{m_{\mathcal Q}}$, recovering the simpler (but more conservative)
condition in \eqref{eq:mu-large} up to constant factors.

\paragraph{(D) Improvement over \textsc{Content} on $\mathcal P$ for small enough $\mu$.}
Assume $m_{\mathcal P}:=\mathcal{E}_{\mathcal P}(w^{\textsc{Content}})-\mathcal{E}_{\mathcal P}(w^{\textsc{Full}})>0$.
A sufficient condition for $\mathcal{E}_{\mathcal P}(w^{\textsc{RC}}(\mu))\le \mathcal{E}_{\mathcal P}(w^{\textsc{Content}})$ is
\[
\sqrt{\mathcal{E}_{\mathcal P}(w^{\textsc{RC}}(\mu))}
\le
\sqrt{\mathcal{E}_{\mathcal P}(w^{\textsc{Content}})}.
\]
By Lemma~\ref{lem:disagreement-to-risk} with $v=w^{\textsc{Full}}$,
\[
\sqrt{\mathcal{E}_{\mathcal P}(w^{\textsc{RC}}(\mu))}
\le
\sqrt{\mathcal{E}_{\mathcal P}(w^{\textsc{Full}})}+\sqrt{\mathcal{D}_{\mathcal P}(w^{\textsc{RC}}(\mu),w^{\textsc{Full}})}.
\]
Hence it suffices that
\begin{equation}
\sqrt{\mathcal{D}_{\mathcal P}(w^{\textsc{RC}}(\mu),w^{\textsc{Full}})}
\le
\sqrt{\mathcal{E}_{\mathcal P}(w^{\textsc{Content}})}-\sqrt{\mathcal{E}_{\mathcal P}(w^{\textsc{Full}})}.
\label{eq:mu-small-sufficient}
\end{equation}
Using \eqref{eq:DP-bound}, condition \eqref{eq:mu-small-sufficient} holds if
\[
\sqrt{L_{\mathcal P}\kappa_H}\,\frac{\mu}{a_{\min}+\mu}\,\|\Delta_C\|
\;\le\;
\sqrt{\mathcal{E}_{\mathcal P}(w^{\textsc{Content}})}-\sqrt{\mathcal{E}_{\mathcal P}(w^{\textsc{Full}})}.
\]
Rearranging yields an explicit upper bound on $\mu$.

This completes the proof.
\end{proof}

\subsection{Proof of Corollary~\ref{cor:mu-interval}}
\label{app:proof-cor}

\begin{proof}[Proof of Corollary~\ref{cor:mu-interval}]
The displayed displacement bounds are exactly \eqref{eq:disp-bounds1} in Lemma~\ref{lem:interp}.
For the Pareto-curve statement, combine the disagreement bounds \eqref{eq:DP-bound} and \eqref{eq:DQ-bound}:
the source disagreement increases with $\mu$ as $\big(\frac{\mu}{a_{\min}+\mu}\big)^2$,
while the target disagreement decreases as $\big(\frac{a_{\max}}{a_{\max}+\mu}\big)^2$.
The existence of a feasible $\mu$ interval that satisfies both improvement conditions follows from
parts (C) and (D) of Theorem~\ref{thm:pareto} whenever the corresponding margins are positive.
\end{proof}


\section{GBDT Risk Bound}
\label{app: gbdt-bound}

We now state a conservative guarantee that parallels the ``controlled refinement'' interpretation of Theorem~\ref{thm:pareto}.
Because boosting is nonconvex in tree structure, we do not attempt to characterize the \emph{optimal} joint model.
Instead, we show that RC \emph{inherits} the anchor's target performance up to a correction term that is explicitly controlled by Stage~2 capacity.
This already yields a verifiable sufficient condition for improving over a belief-dominant baseline on the target population.

\begin{proposition}[Base-margin RC: target-risk stability under bounded correction]
\label{prop:gbdt-stability}
Consider squared loss and a target population $\mathcal{Q}$ over merit features $x_{\merit}$.
Let $F_{\anc}(x_{\merit})$ be any fixed Stage~1 predictor and let $F_{\rc}=F_{\anc}+G$ be the Stage~2 refined predictor.
Assume that on the target population, the Stage~2 correction is uniformly bounded:
\begin{equation}
|G(x_{\merit},x_{\belief})|\le B
\qquad\text{for all $(x_{\merit},x_{\belief})$ in the support of $\mathcal{Q}$.}
\label{eq:Gbound}
\end{equation}
Then the excess target risk satisfies
\begin{equation}
\mathcal{R}_{\mathcal{Q}}(F_{\rc})
\;\le\;
\mathcal{R}_{\mathcal{Q}}(F_{\anc})
\;+\;
2B\,\sqrt{\mathcal{R}_{\mathcal{Q}}(F_{\anc})}
\;+\;
B^2,
\label{eq:gbdt-risk-bound}
\end{equation}
where $\mathcal{R}_{\mathcal{Q}}(F):=\mathbb{E}_{\mathcal{Q}}\big[(F(x)-\mathbb{E}[y\!\mid\!x_{\merit}])^2\big]$ is the (noise-free) squared error on $\mathcal{Q}$.
\end{proposition}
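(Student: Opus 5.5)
The bound is the triangle inequality in $L^2(\mathcal{Q})$, the same mechanism as Lemma~\ref{lem:disagreement-to-risk}, with the disagreement term $\sqrt{\mathcal{D}}$ replaced by the uniform bound $B$ on the Stage~2 correction. Write $g(x_{\merit}):=\mathbb{E}[y\mid x_{\merit}]$ for the noise-free target and use the norm $\|h\|_{\mathcal{Q}}:=\big(\mathbb{E}_{\mathcal{Q}}[h^2]\big)^{1/2}$. Then $\sqrt{\mathcal{R}_{\mathcal{Q}}(F)}=\|F-g\|_{\mathcal{Q}}$.

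First, decompose the residual of the refined predictor as
\[
F_{\rc}-g=(F_{\anc}-g)+G .
\]
Minkowski's inequality in $L^2(\mathcal{Q})$ gives
\[
\sqrt{\mathcal{R}_{\mathcal{Q}}(F_{\rc})}\le \sqrt{\mathcal{R}_{\mathcal{Q}}(F_{\anc})}+\|G\|_{\mathcal{Q}} .
\]
Second, by \eqref{eq:Gbound}, $G^2\le B^2$ pointwise on the support of $\mathcal{Q}$, so $\|G\|_{\mathcal{Q}}\le B$. Squaring both sides, which are nonnegative, yields exactly \eqref{eq:gbdt-risk-bound}.

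Equivalently, one can expand
\[
\mathbb{E}_{\mathcal{Q}}\big[(F_{\anc}-g+G)^2\big]=\mathcal{R}_{\mathcal{Q}}(F_{\anc})+2\,\mathbb{E}_{\mathcal{Q}}\big[(F_{\anc}-g)G\big]+\mathbb{E}_{\mathcal{Q}}[G^2].
\]
The cross term is bounded by Cauchy--Schwarz by $2\sqrt{\mathcal{R}_{\mathcal{Q}}(F_{\anc})}\,B$, and the last term by $B^2$.

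The only point needing care is measurability and integrability. I would assume $F_{\anc}$ has finite second moment under $\mathcal{Q}$, so that $\mathcal{R}_{\mathcal{Q}}(F_{\anc})<\infty$; otherwise the bound is trivial. I would also note that $G$ may depend on $x_{\belief}$ while $g$ depends only on $x_{\merit}$. This does not matter, because the argument is pointwise and never uses the conditional-expectation structure of $g$. If $\mathcal{Q}$ fixes $x_{\belief}$ to a default, as in the frozen start \eqref{eq:coldQ}, the bound holds verbatim.

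I would close by recording the sufficient condition this implies. If a belief-dominant baseline $F^{\textsc{Full}}$ satisfies $\sqrt{\mathcal{R}_{\mathcal{Q}}(F^{\textsc{Full}})}\ge\sqrt{\mathcal{R}_{\mathcal{Q}}(F_{\anc})}+B$, then RC improves on it over $\mathcal{Q}$. Here $B$ is controllable through shrinkage, since for instance $B\le (T-M)\,\eta\,\max_m\|h_m\|_\infty$. This parallels part (C) of Theorem~\ref{thm:pareto}.
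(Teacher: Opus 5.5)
Your proposal is correct and is essentially the paper's proof: the paper expands $\mathbb{E}_{\mathcal{Q}}[(F_{\anc}-m+G)^2]$ into the anchor risk, a cross term bounded via Cauchy--Schwarz by $2B\sqrt{\mathcal{R}_{\mathcal{Q}}(F_{\anc})}$, and $\mathbb{E}_{\mathcal{Q}}[G^2]\le B^2$, which is exactly your second derivation. Your primary Minkowski-then-square route is the same inequality written as the triangle inequality in $L^2(\mathcal{Q})$, and your remarks on integrability and on $G$ depending on $x_{\belief}$ are correct additions that the paper leaves implicit.
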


\paragraph{Proof.}
Let $m(x_{\merit})=\mathbb{E}[y\mid x_{\merit}]$.
Then
$\mathcal{R}_{\mathcal{Q}}(F_{\rc})=\mathbb{E}[(F_{\anc}-m+G)^2]
=\mathbb{E}[(F_{\anc}-m)^2] + 2\mathbb{E}[(F_{\anc}-m)G]+\mathbb{E}[G^2]$.
By Cauchy--Schwarz and the bounds $|G|\le B$, $\mathbb{E}[G^2]\le B^2$, and
$$|\mathbb{E}[(F_{\anc}-m)G]|\le \sqrt{\mathcal{R}_{\mathcal{Q}}(F_{\anc})}\,\sqrt{\mathbb{E}[G^2]}\le B\sqrt{\mathcal{R}_{\mathcal{Q}}(F_{\anc})},
$$
yielding Eq.~\eqref{eq:gbdt-risk-bound}.
\hfill$\square$

\paragraph{Implications and Practical Considerations}
Condition~\eqref{eq:Gbound} can be enforced or monitored in practice.
If each Stage~2 tree has leaf outputs bounded by $b$ (through leaf regularization and clipping), then $B \le \eta_2 T_2 b$.
More generally, $B$ can be estimated empirically on a target proxy (cold validation) by tracking the distribution of $|G|$.
A direct corollary is that if the Stage~1 anchor beats a baseline full model on the target by a margin $\gamma$, then choosing Stage~2 such that the right-hand side in Eq.~\eqref{eq:gbdt-risk-bound} increases risk by less than $\gamma$ ensures RC strictly improves on the target.
This parallels Theorem~\ref{thm:pareto}: RC helps when the anchor is strong on $\mathcal{Q}$ and Stage~2 is a controlled refinement.

\end{document}